\theoremstyle{plain}
\newtheorem{theorem}{Theorem}[section]
\newtheorem{proposition}[theorem]{Proposition}
\theoremstyle{definition}
\theoremstyle{remark}
\newtheorem{remark}[theorem]{Remark}
\icmltitlerunning{Referee Can Play: Conditional Generation Via Model Inversion}
\begin{document}

\twocolumn[
\icmltitle{Referee Can Play: An Alternative Approach to Conditional Generation via Model Inversion}

\icmlsetsymbol{equal}{*}

\begin{icmlauthorlist}
\icmlauthor{Xuantong Liu}{sch}
\icmlauthor{Tianyang Hu}{comp}
\icmlauthor{Wenjia Wang}{sch}
\icmlauthor{Kenji Kawaguchi}{sch1}
\icmlauthor{Yuan Yao}{sch}
\end{icmlauthorlist}

\icmlaffiliation{comp}{Huawei Noah's Ark Lab}
\icmlaffiliation{sch}{The Hong Kong University of Science and Technology}
\icmlaffiliation{sch1}{National University of Singapore}

\icmlcorrespondingauthor{Tianyang Hu}{hutianyang.up@outlook.com}

\icmlkeywords{Machine Learning, ICML}

\vskip 0.3in
]

\printAffiliationsAndNotice{}  

\begin{figure*}[ht]
\setcounter{subfigure}{0}
\begin{center}
    \subfigure[\textbf{Our method}]{
        \begin{minipage}[t]{0.15\linewidth}
            \centering
            \includegraphics[width=1\linewidth]{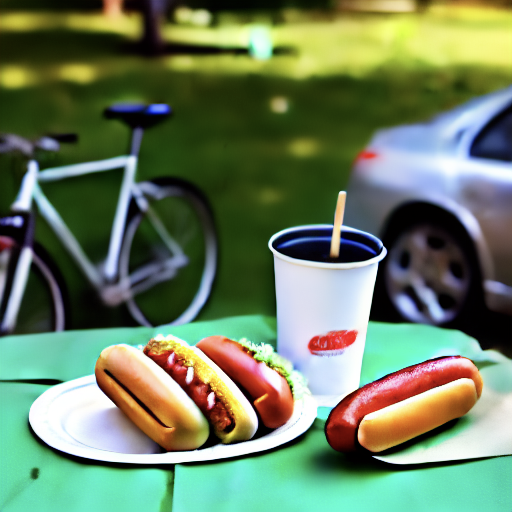}
        \end{minipage}
    }
    \subfigure[SD v1.5]{
        \begin{minipage}[t]{0.15\linewidth}
            \centering
            \includegraphics[width=1\linewidth]{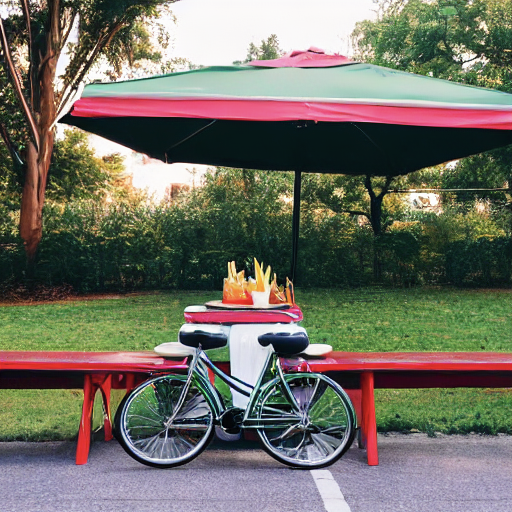}
        \end{minipage}
        }
    \subfigure[Attn-Exct]{
        \begin{minipage}[t]{0.15\linewidth}
            \centering
            \includegraphics[width=1\linewidth]{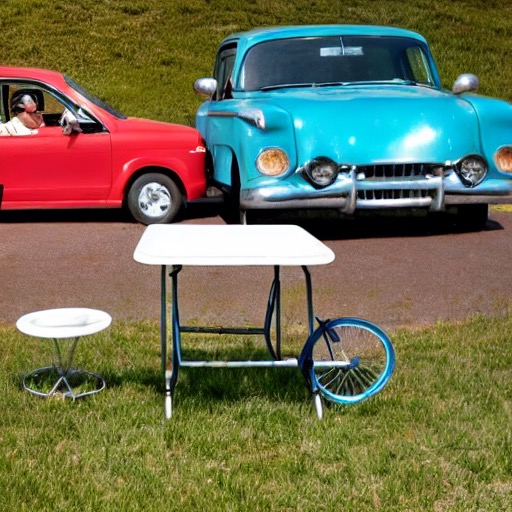}
        \end{minipage}
        }
    \subfigure[PixArt-$\alpha$]{
        \begin{minipage}[t]{0.15\linewidth}
            \centering
            \includegraphics[width=1\linewidth]{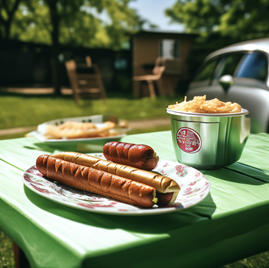}
        \end{minipage}
        }
    \subfigure[DALLE-2]{
        \begin{minipage}[t]{0.15\linewidth}
            \centering
            \includegraphics[width=1\linewidth]{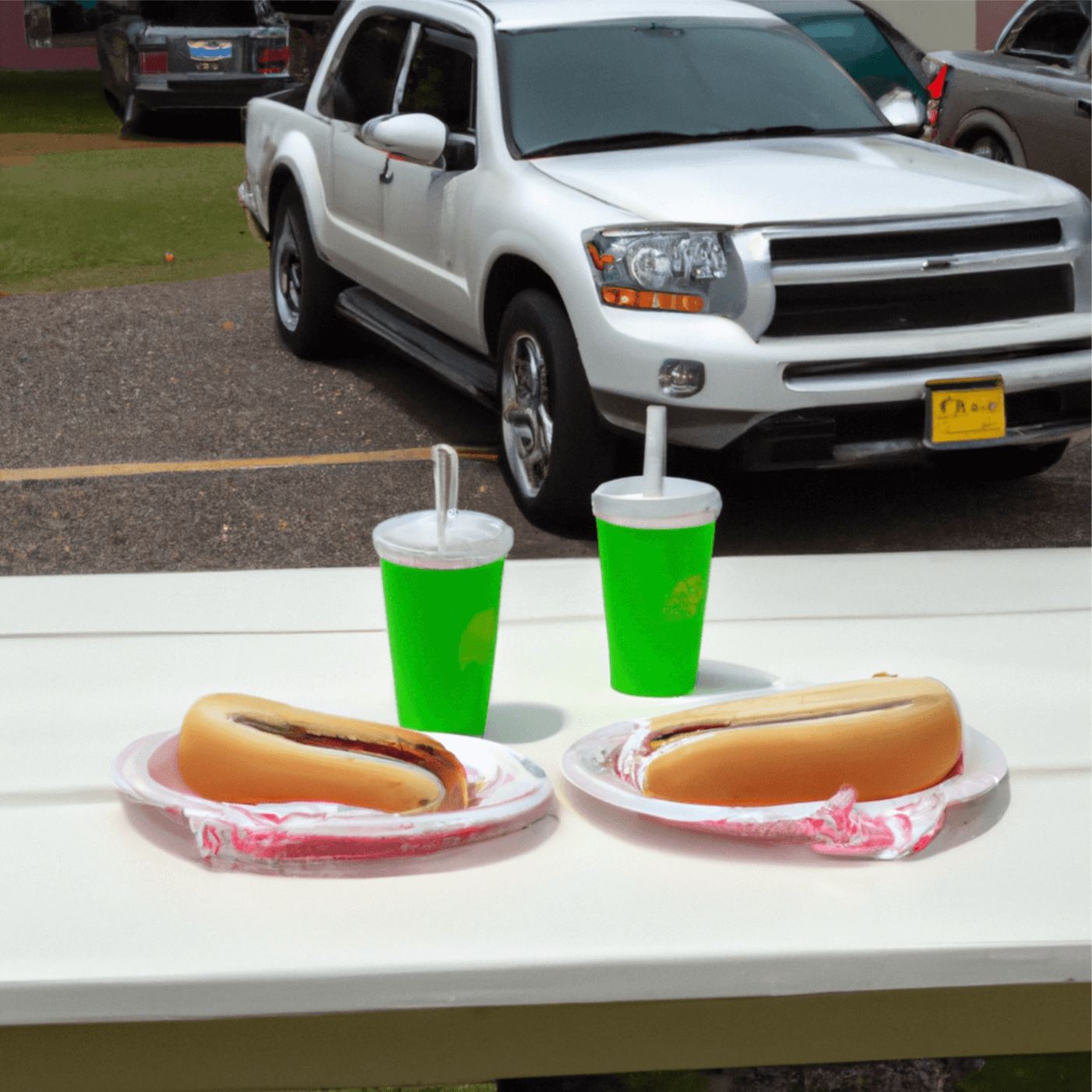}
        \end{minipage}
        }
    \subfigure[DALLE-3]{
        \begin{minipage}[t]{0.15\linewidth}
            \centering
            \includegraphics[width=1\linewidth]{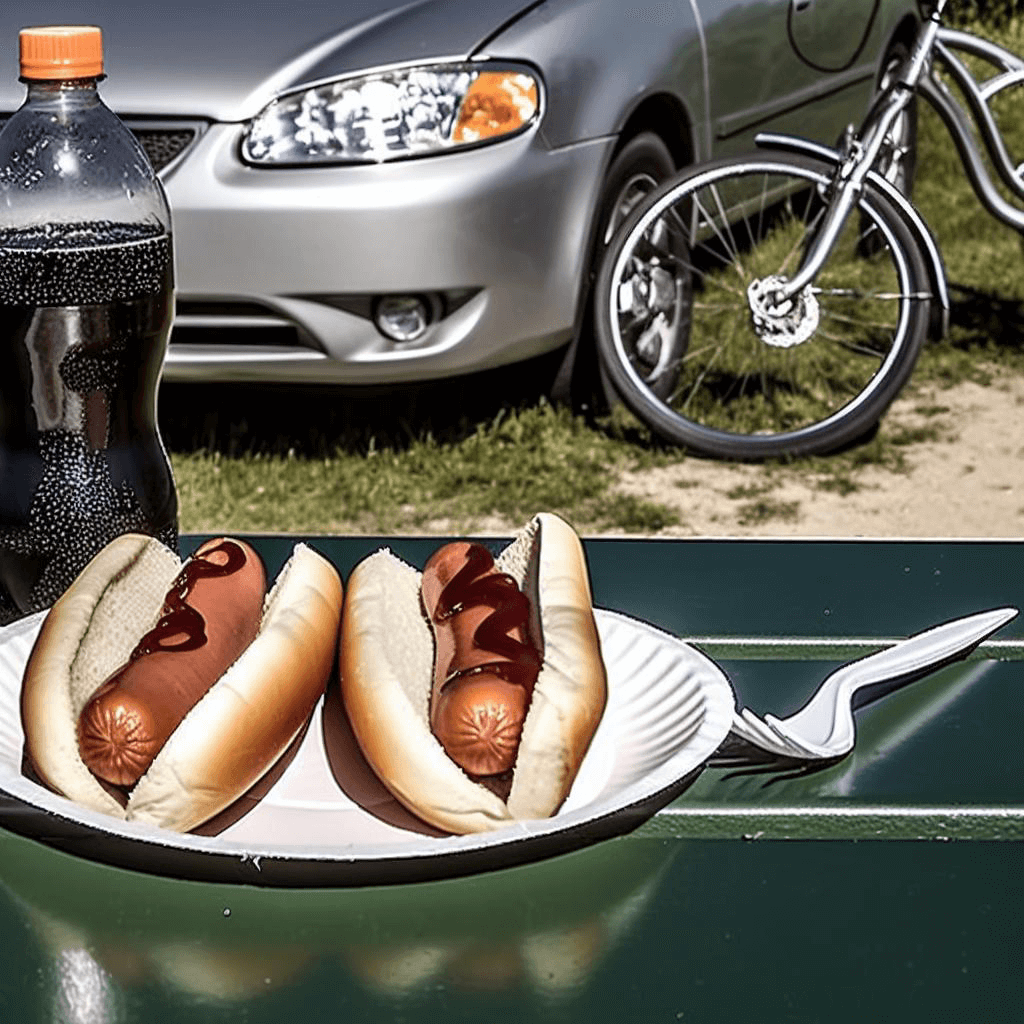}
        \end{minipage}
        }
\end{center}
\label{fig1}
\caption{Our method can effectively generate faithful images strictly following the prompt ``Two hot dogs sit on a white paper plate near a soda cup which is sitting on a green picnic table while a bike and a silver car are parked nearby”. However, the baseline methods, including Stable Diffusion (SD) 1.5, Attend-and-Excite \citep{chefer2023attendandexcite}, PixArt-$\alpha$ \citep{chen2023pixart}, DALLE-2 \citep{dalle2}, DALLE-3 \citep{betker2023dalle3}, struggle to generate the right images encountering this kind of complex compositional prompts.}
\end{figure*}

\begin{abstract}
As a dominant force in text-to-image generation tasks,
Diffusion Probabilistic Models (DPMs) face a critical challenge in controllability, struggling to adhere strictly to complex, multi-faceted instructions. 
In this work, we aim to address this alignment challenge for conditional generation tasks. 
First, we provide an alternative view of state-of-the-art DPMs as a way of inverting advanced Vision-Language Models (VLMs). 
With this formulation, we naturally propose a training-free approach that bypasses the conventional sampling process associated with DPMs. 
By directly optimizing images with the supervision of discriminative VLMs, the proposed method can potentially achieve a better text-image alignment. 
As proof of concept, we demonstrate the pipeline with the pre-trained BLIP-2 model and identify several key designs for improved image generation. 
To further enhance the image fidelity, a Score Distillation Sampling module of Stable Diffusion is incorporated. 
By carefully balancing the two components during optimization, our method can produce high-quality images with near state-of-the-art performance on T2I-Compbench.
\end{abstract}

\section{Introduction}
With exceptional sample quality and scalability, 
DPMs \citep{sohl2015deep, ho2020denoising, song2020score} have significantly contributed to the success of Artificial Intelligence Generated Content (AIGC), especially in text-to-image generation. 
The performance of state-of-the-art (SOTA) DPMs, e.g., Stable Diffusion \citep{rombach2022high, podell2023sdxl}, PixArt-$\alpha$ \citep{chen2023pixart}, in generating various images with \textit{high-fidelity} is no longer a major concern.
What is left to be desired is \textit{controllability}, i.e., the compatibility between the generated image and the input text. 
For instance, a simple composite prompt like ``a red backpack and a blue book" is challenging for Stable Diffusions (SD) \citep{huang2023t2i}.

To better understand the condition injection mechanism of modern text-to-image generation models, let us first consider the relationship between text (i.e., condition) and image.
In practice, the actual condition the model takes usually contains an extended version of the input prompt paraphrased by language models. 
As the text description gets richer, more information is dictated for the target image with less nuisance to fill. 
In other words, the stochasticity of the generation task gradually decreases with extra conditions, where we may even expect almost one-to-one matching between the text and image. 
This transition calls for new thinking about the conditional generation task. 

On the one hand, when training such strong-condition models, high-quality paired data $(\xb, \yb)$ are needed where $\xb$ satisfies the condition $\yb$. 
For most of the generative models, the generation process can be described as mapping $(\yb, \mathbf{\epsilon})$ to $\xb$ where $\mathbf{\epsilon}$ is a random vector providing diversity. 
In practice, the controllability of the generative model is critically dependent on the label quality, i.e., $\yb|\xb$ should be as detailed as possible. 
Beyond human labeling, SOTA DPMs such as DALLE-3 \citep{betker2023dalle3} utilize powerful vision language models (VLMs) to regenerate image captions during training. 
Then, given a new prompt $\yb$ at the inference phase, the ideal image $\xb|\yb$ should be deemed fit by the VLM. 
From this perspective, the text-to-image generation case can be seen as a \textit{model inversion} task on the VLM, with explicit mappings parametrized by the score network in DPMs.
On the other hand, the discriminative module also plays a more central role in aligning with the condition.
As supporting evidence, consider the importance of guidance in current text-to-image DPMs \citep{ho2022classifier, bansal2023universal, ma2023elucidating}. 
Although designed to work directly, a relatively large classifier-free guidance (CFG) coefficient is indispensable (7.5 by default in SD). 
When faced with the strong condition of a multi-attribute object correspondence, the image produced by a higher CFG is more semantically compliant (see results in Appendix \ref{apdix:discri_gener}).

The utilization of VLM in DALLE-3 and a higher CFG in SD emphasize the importance of the discriminative component for image-text alignment, which may have been overshadowed by the generative counterpart. 
Fortunately, discrimination (image-to-text) is relatively easier than generation (text-to-image), since an image often contains more information than its text description.  
When a misalignment happens during diffusion generation, it can be trivial for a VLM to tell. 
Based on this premise, T2I-CompBench \citep{huang2023t2i} employs several discriminative VLMs, e.g., BLIP-VQA \cite{li2022blip} and CLIP \cite{radford2021clip} as referees, to measure text-image compatibility of SOTA generative models. The alignment can be further improved by finetuning with ``good" text-image data selected by the referees.

Inspired by the above observations, we propose a novel conditional image generation paradigm predominantly led by VLMs for improved image-text alignment. 
Given textual prompts, we generate images by conducting model inversion on the VLM. Specifically, we keep the VLM parameters fixed, treating the image as the optimization object. 
For more efficient parameterization, we adopt the pre-trained VAE used in the latent diffusion model \citep{rombach2022high} and conduct optimization on the latent variable space $\zb$.
We start with random noise and progressively refine $\zb$ by minimizing the loss function used in VLM pre-training, which measures image-text consistency (\textbf{Figure} \ref{fig:method} illustrates the core idea of our method and its comparison against current DPMs).
Additionally, to enhance image fidelity, we propose incorporating gradients provided by Score Distillation Sampling. 
The overall generation pipeline is entirely training-free and data-free, and the controllability of our method is greatly improved compared to SD.
While our method may not generate images as elaborate as those specialized generation models like DALLE-3, we achieve comparable performance on image-text alignment. 

\begin{figure}[t]
    \includegraphics[width=0.48\textwidth]{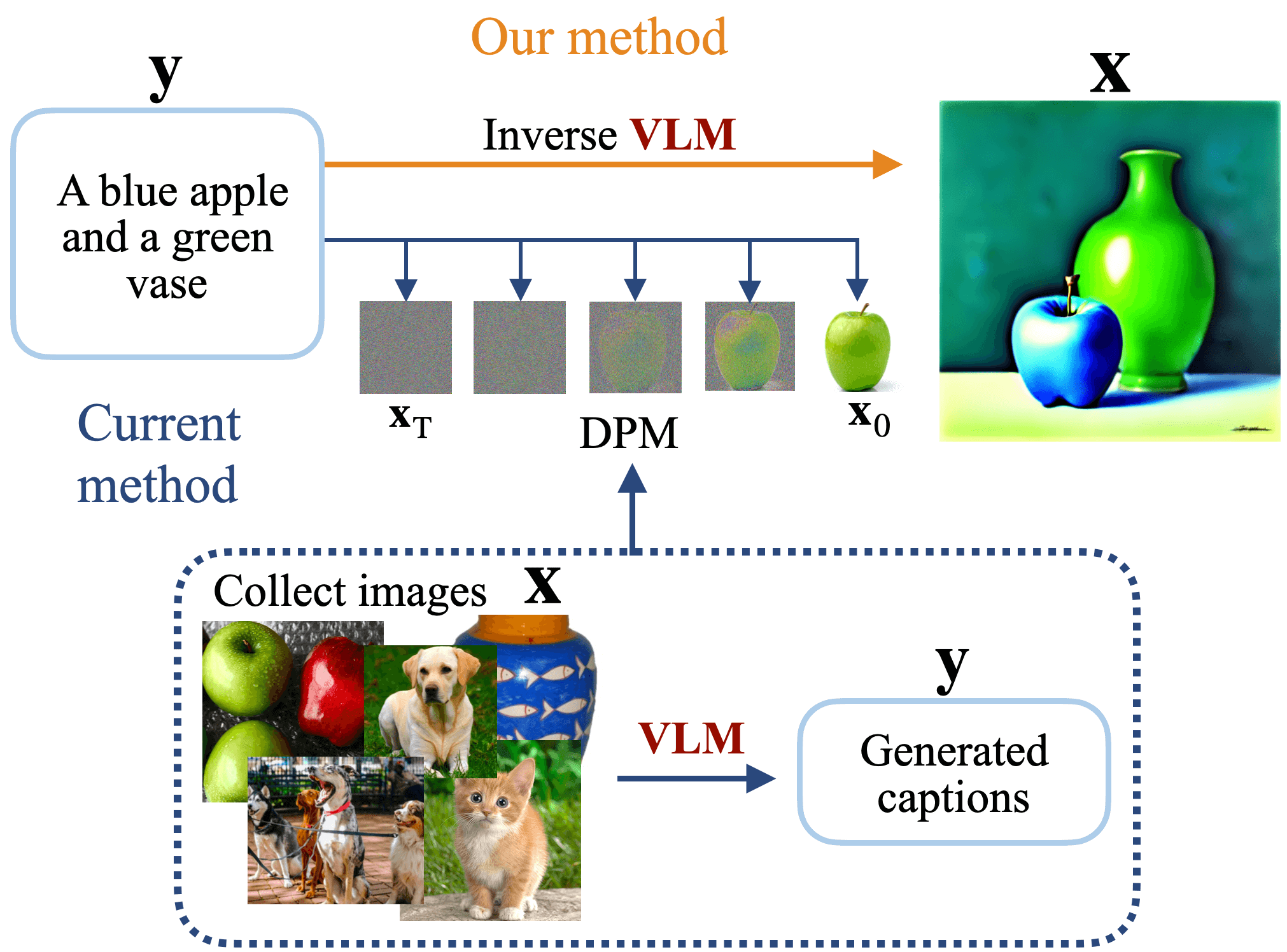}
    \caption{SOTA image generation models require collecting a large number of images which are then labeled in detail by a VLM for training. During the inference phase, the condition ($\yb$) is injected into the denoising process of random noise ($\xb_\mathrm{T}$) through a cross-attention mechanism. The key difference in our approach is that we generate images by directly inversing the VLM, eliminating the need to train a specialized generation model.}
    \label{fig:method}
\end{figure}

The main contributions of this work are summarized below. 
\begin{itemize}
    \item 
    We introduce a novel perspective by understanding strong conditional text-to-image generation as \textit{model inversion}, shedding light on the crucial role of discriminative models in the conditional generation process.
    \item 
    We propose a method that places discriminative models such as VLMs at the forefront, introducing a shift in the text-to-image generation paradigm. 
    Our method is training-free and highly flexible. Several key design choices are elucidated, e.g., augmentation regularization, exponential moving average with restart, etc. 
    \item 
    Our method achieves near SOTA results on benchmarks that measure the generation models' controllability \citep{huang2023t2i}.
\end{itemize}

\section{Preliminary}
\paragraph{Discriminative Vision Language Models}
VLMs represent a pivotal advancement in the field of multimodal representation learning. These models undergo extensive pre-training on expansive datasets comprising image-text pairs, enabling them the capacity for zero-shot predictions across a diverse spectrum of tasks that encompass both visual and linguistic information, including image-text retrieval, image captioning, visual question answering (VQA), etc \citep{du2022vlmsurvey}. The central objective of VLM pre-training is to imbue these models with a profound understanding of the intricate alignment between textual and visual modalities. To achieve this goal, various alignment losses are employed, e.g., contrastive loss in CLIP \citep{radford2021clip}, ALIGN \citep{jia2021ALIGN}, ALBEF \citep{li2021align}, BLIP \citep{li2022blip}, BLIP -2\citep{li2023blip2}. 
In contrast to text-to-image generation models, VLMs exhibit a superior aptitude for aligning textual and visual information. CLIP image-text similarity is widely utilized to measure the alignment of the generated images with the textual condition \citep{hessel2021clipscore, ma2023elucidating}. A recent compositional text-to-image generation evaluation benchmark T2I-CompBench \citep{huang2023t2i} further uses the BLIP-VQA model \citep{li2022blip} as a referee to judge the correctness of the generated images.

\paragraph{Latent Diffusion Models } 
Latent diffusion models (LDMs) \citep{rombach2022high} conduct the forward and reverse processes in the latent space of an autoencoder. The additional encoder $\mathrm{ENC}$ and decoder $\mathrm{DEC}$ are required to map the original image $\xb$ to a latent variable $\mathbf{z}$ and reconstruct the image from $\zb$, such that $\mathrm{DEC}(\mathrm{ENC}(\mathbf{x}))\approx \mathbf{x}$.
Classifier-free guidance (CFG) \citep{ho2022classifier} is a commonly used conditional generation method in DPMs. Given a condition $\yb$ and a pre-trained text-to-image DPM with the noise prediction neural network $\phi$, CFG generates images via $\hat{\epsilon}(\mathbf{x}_t;\yb,t)=\epsilon_{\phi}(\mathbf{x}_t;\yb,t)+s(\epsilon_{\phi}(\mathbf{x}_t;\yb,t)-\epsilon_{\phi}(\mathbf{x}_t;t)$), where $s>0$ is the guidance scale. 

\paragraph{Score Distillation } 
Score distillation sampling (SDS) is an optimization mechanism to distill the rich knowledge from pre-trained text-to-image generation diffusion models \citep{poole2022dreamfusion, luo2023diff}. SDS allows optimizing differentiable generators, and it has been widely explored in text-to-3D generation \citep{wang2023prolificdreamer, wang2023steindreamer}, and image editing tasks \citep{hertz2023delta, kim2023collaborative}. 
Given a pre-trained text-to-image LDM with the noise prediction neural network $\phi$ with noise $\epsilon \sim
\mathcal{N}(\mathbf{0}, \mathbf{1})$, SDS
optimizes a group of parameters $\theta$ by:
\begin{equation*}
    \nabla_{\theta}\mathcal{L}_{SDS}(\phi, \xb=g(\theta)) \approx \mathbb{E}_{t,\mathbf{\epsilon}}[(\mathbf{\epsilon_\phi}(\zb_t;\yb,t)-\mathbf{\epsilon})\frac{\partial \xb}{\partial \theta}],
\end{equation*}
where $g(\theta)$ can be any differentiable function. 

\section{Conditional generation via model inversion}

The image caption quality is critical for training text-to-image generation models that possess good image-text understanding ability.
State-of-the-art models have increasingly harnessed discriminative VLMs' capabilities to improve the controllability and precision of image synthesis, such as DALLE-3 \citep{betker2023dalle3} and PixArt-$\alpha$ \citep{chen2023pixart}.
By generating comprehensive and detailed textual descriptions for images ($\xb \stackrel{\mathrm{VLM}}{\longrightarrow} \yb$), the gap between textual and visual information narrows down, allowing the image generation model (e.g., DPMs) to master a more exact alignment. 
At the inference stage, the model is required to generate images from the provided prompt ($\yb \stackrel{\mathrm{DPM}}{\longrightarrow} \xb$). Therefore, the diffusion model essentially functions as a learned \textit{inverse} of the VLM.

Instead of training a DPM to learn the inverse function of VLM, we venture into directly leveraging VLM to perform the reverse task: generating the corresponding image given a text prompt via optimization. 
From the perspective of model inversion, the inverse map learned by mainstream DPMs is parameterized by the score net and can be thought of as continuous (it's a deterministic map if using ODE sampler). 
In clear contrast, the inverse map from direct optimization-based model inversion is stochastic and not continuous. 
Thanks to the discontinuity, the inverse map from the latter approach is not limited by any neural network parameterization and can be unlimited in terms of expressivity \citep{feng2021uncertainty, hu2023complexity}.

\subsection{Vision Language Model Inversion for Alignment}

To fully harness the text and image alignment information ingrained within the pre-trained VLMs, we propose to synthesize text-conditional images commencing from random noise with the supervision of a VLM.  

\paragraph{Problem formulation}
Denote the alignment loss given by a VLM as $L(\xb, \yb)$, where a smaller $L(\xb, \yb)$ indicates better alignment between $\xb$ and $\yb$. We formulate the conditional generation task as an optimization problem, i.e., $\min_{\xb} L(\xb, \yb)$,
where the VLM is kept frozen, and the image is the optimizing target. 
No extra training or data is needed, so our method is highly efficient.
As a proof of concept, we demonstrate with the pre-trained BLIP-2 \citep{li2023blip2} model, which has lightweight training loss and good zero-shot performance. 
We utilize the image-text matching loss $L_{itm}$ and the caption generation loss $L_{cg}$ in BLIP-2 pre-training~\citep{li2021align}, i.e., $L(\xb, \yb) = L_{itm} +L_{cg}$.

\paragraph{Parameterization} 
Images usually reside in high-dimensional spaces (e.g., $512\times 512\times 3$), which makes the optimization problem challenging and computationally expensive. Inspired by LDMs \citep{rombach2022high}, we utilize a pre-trained autoencoder to reduce the dimensionality of the optimization target from pixel space to low-dimensional latent space. We choose the widely used KL-VAE with a downsampling factor $f=8$ in the LDM \citep{rombach2022high}. Denote the decoder as $\mathrm{DEC}$.
Then, $\xb$ is parameterized as $\mathrm{DEC}(\zb)$ and we can directly work with $\zb \in \mathbb{R}^{64\times 64 \times 4}$.

\paragraph{Necessity of Regularization}
Achieving high-quality images through direct discriminative model inversion is almost impossible \citep{yin2020deepinversion, wang2022traditionalCaG}.
The first row of \textbf{Figure} \ref{fig:augs_result} shows the generated images from a naive implementation of the above proposal. During optimization, the alignment loss can be effectively minimized, but the resulting images are not natural. 
They can be thought of as \textit{adversarial} samples that are recognized by VLMs but not humans. 
The existence of adversarial samples has been extensively studied in the adversarial robustness literature \citep{goodfellow2014explaining, arrieta2020explainable}. 
Therefore, generating plausible images by vanilla optimization is not viable. 
Extra regularization is needed to constrain the search space to align with human perception.

To address the challenge, we take inspiration from contrastive learning, where semantic invariant augmentations are constructed to specify the equivariant groups in the feature space so that learning can be more efficient \citep{dangovski2021equivariant, haochen2022theoretical, hu2022your}. 
Similarly, we consider generating multiple samples with semantic invariance but slight image variations through data augmentation $A\sim\mathbb{P}_{A}$ and use their averaged loss as the optimization objective.
Correspondingly, we can define the \textit{augmentation-regularized loss} as 
\begin{equation}
\label{eqn:aug_loss}
        \tilde{L}_{\yb}(\xb) :=\EE_A L(A(\xb), \yb),
\end{equation}
where the expectation is taken over the random augmentations. 
This is similar to random smoothing \citep{cohen2019certified, li2019certified, ding2023random}. 
The augmentation regularizes the search space by removing the adversarial solutions $\xb$ where $L(\xb)$ is low while $\tilde{L}(\xb)$ is high. 
To illustrate, consider $\mathbb{P}_A$ as random masking. Then, for any specific masking $A'$,  
$\tilde{L}_{\yb}(\xb) \le c$ implies that 
\begin{align*}
    \tilde{L}_{\yb}(A'(\xb)) \le & c + \mathbb{P}(A \notin S_{A'})\cdot\\
    &  \EE_{A \notin S_{A'}}\left[L_{\yb}((A\circ A')(\xb)) - L_{\yb}(A(\xb))\right],
\end{align*}
where $S_{A'}=\{A \mid A\circ A' = A\}$. Thus, minimizing $\tilde{L}(\xb)$ implicitly minimizes $\tilde{L}(A'(\xb))$ for all $A'$ such that $\mathbb{P}(A \notin S_{A'})$ is low. 
To verify, an image $\xb$ in the 1st row of Figure \ref{fig:augs_result} indeed obtains a low $L(\xb)$, while having a high $\tilde{L}(\xb)$ (see results in \textbf{Table} \ref{tab:aug_blip_loss}). Thus, such an adversarial solution is removed in the 2nd and 3rd rows where a returned image $\xb$ is ensured to have low values in $\tilde{L}$. 
This augmentation regularization has been proven effective in improving overall results, including the perceptual image quality (\textbf{Figure} \ref{fig:augs_result}) and BLIP-VQA score (\textbf{Table} \ref{tab:blip_aug_vqa}).

\begin{figure}[t]
    \centering
    \includegraphics[width=0.47\textwidth]{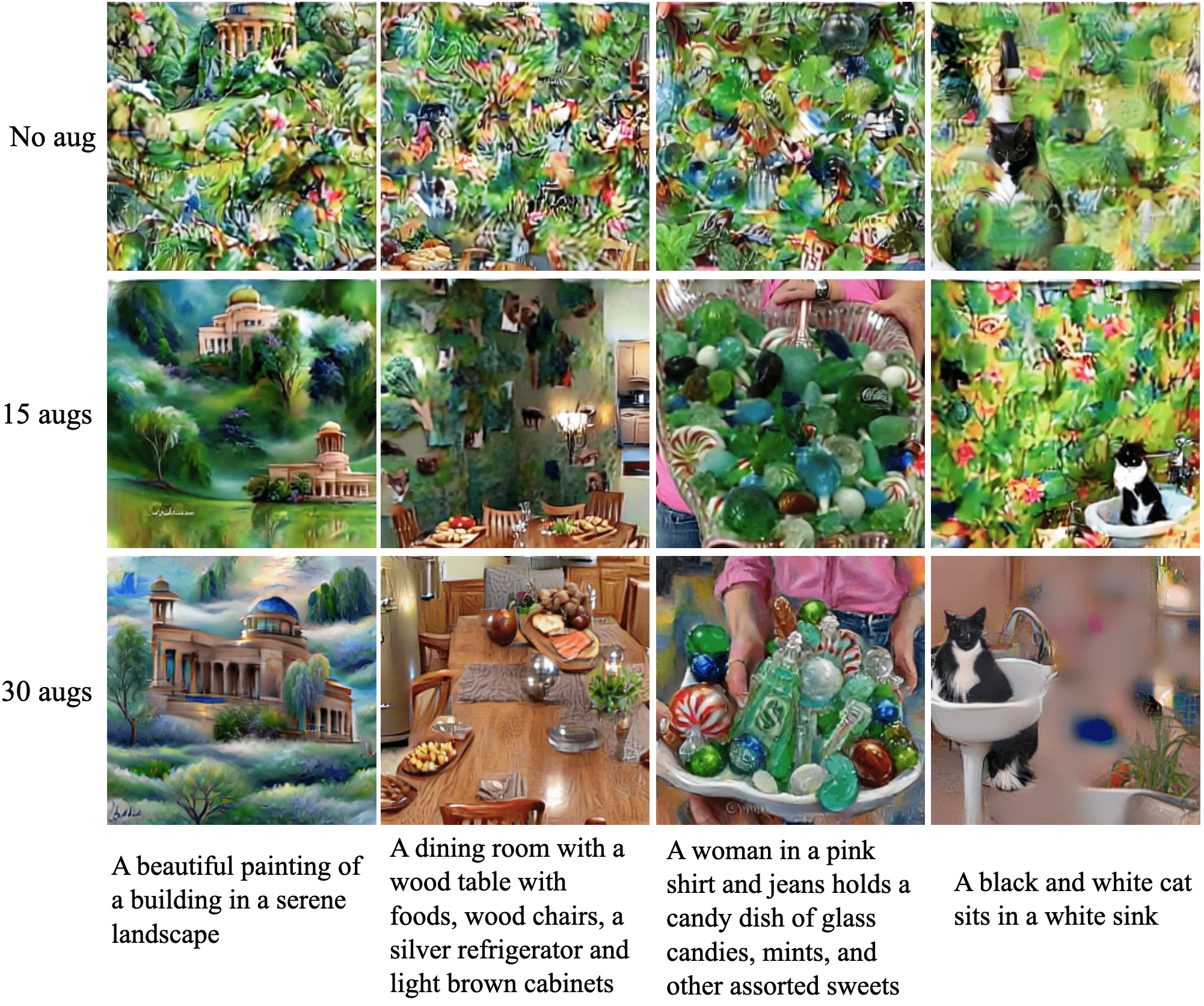}
    \caption{The illustration of the influence of augmentation regularization on BLIP-2 inversion. We show the result of no augmentation (top row), 15 augmentations (middle row), and 30 augmentations (bottom row).}
    \label{fig:augs_result}
\end{figure}

\begin{table}[H]
\centering
\begin{small}
\caption{The BLIP-VQA score of BLIP-2 inversion without and with augmentations on \textit{Attribute-binding} dataset}
\label{tab:blip_aug_vqa}
\begin{tabular}{l|ccc}
\toprule
 & \textbf{color}  & \textbf{shape}  & \textbf{texture} \\ \midrule
no aug & 0.6561 & 0.5078 & 0.5371 \\ 
30 augs & 0.8639 & 0.6686 & 0.7311  \\ \bottomrule
\end{tabular} 
\end{small}
\end{table}

\begin{table}[H]
\centering
\begin{small}
\caption{The average BLIP-2 loss ($L_{itm} \& L_{cg})$ of images on the 1st and 3rd row of \textbf{Figure} \ref{fig:augs_result}.}
\label{tab:aug_blip_loss}
\vspace{0.1in}
\begin{tabular}{l|cc}
\toprule
 & $L(\xb)$  & $\tilde{L}(\xb)$ \\ \midrule
no aug (1st row) & 2e-3 $\&$ 3.28 & 5.65 $\&$ 4.66 \\ 
30 augs (3rd row) & 1e-5 $\&$ 3.08 & 2e-3 $\&$ 3.30  \\ \bottomrule
\end{tabular} 
\end{small}
\end{table}

As an added benefit, augmentation regularization contributes to smoother images and lower loss values, as can be seen in \textbf{Table} \ref{tab:aug_blip_loss}.
The following propositions unveil that Gaussian random augmentation will result in a smoother objective function and easier optimization for model inversion. All technical details can be found in \textbf{Appendix} \ref{sec:app_tech}.
\begin{proposition}[informal]
    Under some mild regularity conditions on the augmentation,  $\tilde{L}_y$ is strictly smoother than $L_y$. Particularly, if $A(\xb) = \xb+\epsilon$ where $\epsilon$ is Gaussian,  $\tilde{L}_y$ has infinite smoothness and is Lipschitz continuous. 
\end{proposition}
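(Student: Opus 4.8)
The plan is to recognize that $\tilde L_{\yb}$ is a \emph{smoothing} of $L_{\yb}$ against the law of the augmentation — in the additive case literally a convolution — and then to move derivatives off $L_{\yb}$ and onto the (smooth) smoothing kernel. Concretely, in the additive model $A(\xb) = \xb + \epsilon$ with $\epsilon \sim p$ we have $\tilde L_{\yb}(\xb) = \EE_\epsilon L_{\yb}(\xb+\epsilon) = (L_{\yb} * \check p)(\xb)$, a convolution of $L_{\yb}$ with the density $p$ (reflected). The ``mild regularity condition'' I would impose on a general augmentation is exactly that its randomness contains such an absolutely continuous component whose density $p$ is of class $C^k$ with every partial derivative $\partial^\alpha p$, $|\alpha|\le k$, integrable — equivalently, that the pushforward law admits a $C^k$ mollifier; genuinely nonlinear augmentations are reduced to this by writing $A(\xb)$ as a deterministic (a.s.\ differentiable) map composed with additive noise. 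Together with local integrability and at most polynomial growth of $L_{\yb}$ (trivially satisfied by the bounded BLIP-2 losses used here), dominated convergence then justifies differentiating under the integral, yielding the key identity $\partial^\alpha \tilde L_{\yb} = L_{\yb} * \partial^\alpha \check p$ for all $|\alpha|\le k$.

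From this identity the general ``strictly smoother'' claim is immediate: $\tilde L_{\yb}\in C^k$ whenever the kernel is $C^k$, \emph{irrespective of the regularity of $L_{\yb}$}. Hence if the augmentation kernel is strictly smoother than $L_{\yb}$ — the generic case, since $L_{\yb}$ is only assumed measurable (or continuous) — then $\tilde L_{\yb}$ strictly gains derivatives; one should read ``strictly smoother'' as ``at least as smooth, and strictly so unless $L_{\yb}$ already saturates'' (if $L_{\yb}$ were already $C^\infty$ there is nothing to gain). Quantitatively, the same identity gives the modulus bound $\|\partial^\alpha \tilde L_{\yb}\|_\infty \le \|L_{\yb}\|_\infty\,\|\partial^\alpha p\|_{L^1}$ when $L_{\yb}$ is bounded, so each derivative of $\tilde L_{\yb}$ is controlled by the kernel alone.

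For the Gaussian specialization, take $p=\varphi_\sigma$, the $\mathcal N(\mathbf 0,\sigma^2 I)$ density. Its partial derivatives are $\partial^\alpha\varphi_\sigma = (\text{Hermite polynomial in }\xb/\sigma)\cdot\varphi_\sigma$, which are bounded and in $L^1$ for \emph{every} multi-index $\alpha$; thus the order $k$ above is arbitrary and $\tilde L_{\yb}\in C^\infty$ with $\|\partial^\alpha\tilde L_{\yb}\|_\infty<\infty$ for all $\alpha$, which is the claimed infinite smoothness. For global Lipschitzness, the Stein / integration-by-parts identity $\nabla\tilde L_{\yb}(\xb)=\tfrac{1}{\sigma^2}\,\EE_\epsilon\big[\epsilon\,L_{\yb}(\xb+\epsilon)\big]$ gives $\|\nabla\tilde L_{\yb}\|_\infty \le \tfrac{1}{\sigma^2}\|L_{\yb}\|_\infty\,\EE\|\epsilon\| \le \tfrac{\sqrt d}{\sigma}\|L_{\yb}\|_\infty<\infty$, so $\tilde L_{\yb}$ is Lipschitz with constant $O(\sqrt d\,\|L_{\yb}\|_\infty/\sigma)$.

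The routine parts are the convolution identities and the Hermite-derivative bookkeeping. The delicate points — and what the formal statement's hypotheses must pin down — are: (i) phrasing the ``mild regularity conditions'' so that differentiation under the integral is legitimate for a possibly unbounded or merely measurable VLM loss, which requires matching a growth bound on $L_{\yb}$ to the tail decay of the augmentation kernel; and (ii) making ``smoother'' precise and reducing non-additive augmentations — random crops and masking are linear in $\xb$ and so fold into the additive-noise argument via a deterministic Jacobian, but fully nonlinear ones need a uniform bound on the augmentation Jacobians. I expect (i) to be the main obstacle in turning this sketch into a clean theorem.
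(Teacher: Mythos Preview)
Your argument is correct and complete, but it takes a genuinely different route from the paper's own proof. The paper works on the Fourier side: it defines smoothness via Sobolev spaces $H^m(\mathbb{R}^d)$, writes $\mathcal{F}(\tilde L_y)(\omega)=\mathcal{F}(L_y)(\omega)\,\varphi(\omega)$ with $\varphi$ the characteristic function of the noise, and assumes the algebraic decay $\varphi(\omega)\le c_1(1+\|\omega\|^2)^{-m_1/2}$ as its ``mild regularity condition''; this directly yields $\tilde L_y\in H^{m+m_1}$ and, for Gaussian noise, $\tilde L_y\in H^{m+m_2}$ for every $m_2$. You instead stay in the spatial domain, identify $\tilde L_y=L_y*\check p$, and transfer derivatives onto the kernel via $\partial^\alpha\tilde L_y=L_y*\partial^\alpha\check p$ to get $C^k$ (resp.\ $C^\infty$) regularity. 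Both are standard and both prove the informal claim; the Fourier/Sobolev formulation has the advantage of quantifying \emph{how many} derivatives are gained directly from the decay exponent of $\varphi$, while your mollification argument is more elementary, needs no integrability of $L_y$ (boundedness suffices), and handles the ``strictly smoother'' phrasing more transparently.

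For the Lipschitz part the two arguments coincide at the level of the gradient identity $\nabla\tilde L_y(x)=\sigma^{-2}\,\EE_\epsilon[\epsilon\,L_y(x+\epsilon)]$, but the paper then bounds the \emph{directional} derivative $|\nabla\tilde L_y(x)^\top v|\le \sigma^{-2}\|L_y\|_\infty\,\EE|\epsilon^\top v|=\sqrt{2/(\pi\sigma^2)}\,\|L_y\|_\infty$, which is dimension-free, whereas your bound via $\EE\|\epsilon\|$ picks up an extra $\sqrt{d}$. This is only a sharpness issue, not a gap; your constant still establishes Lipschitz continuity as claimed.
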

The optimization problem can greatly benefit from both a smoother objective function \citep{kovalev2022first} and bounded Lipschitz constant
\citep{ghadimi2013stochastic, bertsekas2016nonlinear}. 
From another angle, augmentation can also improve the condition number of convex optimization, as stated in the following proposition. 
\begin{proposition}[informal]
    Assume $L_c(\cdot)$ is a convex function and denote its Hessian matrix at $x$ as $\partial^2 L_c(x)$. Under some mild regularity conditions, the condition number of $\partial^2\tilde{L}_c$ is strictly smaller than that of $\partial^2 L_c(x)$. 
\end{proposition}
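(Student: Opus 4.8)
The plan is to reduce to the Gaussian additive augmentation used in the preceding proposition, $A(x)=x+\epsilon$ with $\epsilon\sim\mathcal N(0,\sigma^2 I)$, so that $\tilde L_c$ is the Gaussian mollification of $L_c$. Since the preceding proposition already gives that $\tilde L_c$ is smooth (indeed $C^\infty$) with a Lipschitz gradient, I can differentiate twice under the expectation to obtain
\[
\partial^2 \tilde L_c(x)\;=\;\EE_{\epsilon\sim\mathcal N(0,\sigma^2 I)}\bigl[\partial^2 L_c(x+\epsilon)\bigr].
\]
Thus the Hessian of $\tilde L_c$ at $x$ is a \emph{mixture} of the Hessians of $L_c$ over a Gaussian neighbourhood of $x$; in particular it is positive semidefinite (so $\tilde L_c$ is convex), and if $mI\preceq\partial^2 L_c\preceq MI$ holds uniformly then the same sandwich holds for $\partial^2\tilde L_c$.

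The core step is a Jensen argument built on the fact that $H\mapsto\lambda_{\max}(H)=\sup_{\|u\|=1}u^\top Hu$ is convex while $H\mapsto\lambda_{\min}(H)=\inf_{\|u\|=1}u^\top Hu$ is concave. Applying this to the mixture representation yields, for every $x$,
\begin{align*}
\lambda_{\max}\bigl(\partial^2\tilde L_c(x)\bigr) &\le \EE_{\epsilon}\,\lambda_{\max}\bigl(\partial^2 L_c(x+\epsilon)\bigr),\\
\lambda_{\min}\bigl(\partial^2\tilde L_c(x)\bigr) &\ge \EE_{\epsilon}\,\lambda_{\min}\bigl(\partial^2 L_c(x+\epsilon)\bigr),
\end{align*}
and hence $\kappa\bigl(\partial^2\tilde L_c(x)\bigr)\le \EE_{\epsilon}\lambda_{\max}(\partial^2 L_c(x+\epsilon))\big/\EE_{\epsilon}\lambda_{\min}(\partial^2 L_c(x+\epsilon))\le M/m$, the (uniform) condition number of the problem. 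To upgrade ``$\le$'' to ``$<$'' I would invoke that a nondegenerate Gaussian has full support: if $x\mapsto\partial^2 L_c(x)$ is continuous and $\lambda_{\max}(\partial^2 L_c(\cdot))$ is not identically equal to its supremum $M$, then $\{u:\lambda_{\max}(\partial^2 L_c(u))<M\}$ is a nonempty open set, which carries strictly positive Gaussian mass from any centre $x$, so $\EE_{\epsilon}\lambda_{\max}(\partial^2 L_c(x+\epsilon))<M$; combined with $\EE_{\epsilon}\lambda_{\min}(\partial^2 L_c(x+\epsilon))\ge m>0$ this gives $\kappa(\partial^2\tilde L_c(x))<M/m$. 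The symmetric argument on $\lambda_{\min}$ covers the case where it is the smallest eigenvalue that is the non-constant one.

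The step I expect to be the real obstacle — and the content hidden in the phrase ``mild regularity conditions'' — is making ``strictly smaller'' honest. At a fixed point the claim is simply false for a pure quadratic, where Gaussian convolution leaves the Hessian (and therefore its spectrum) unchanged, so the hypotheses must exclude the degenerate case in which the extremal eigenvalues of $\partial^2 L_c$ are both constant. Moreover, to conclude a single inequality $\kappa(\partial^2\tilde L_c)<\kappa(\partial^2 L_c)$ rather than a family of pointwise ones, one needs the extremal eigenvalues of $\partial^2 L_c$ to be \emph{attained} (e.g.\ via coercivity / compact sublevel sets, or a well-defined limiting Hessian at infinity), so that the strict drop survives taking $\sup_x\lambda_{\max}$ and $\inf_x\lambda_{\min}$; I would formulate these as the standing assumptions. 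Finally, I would add the analogous remark for a general \emph{linear} augmentation $A(x)=R_Ax$: there $\partial^2\tilde L_c(x)=\EE_{A}\bigl[R_A^{\top}\,\partial^2 L_c(R_Ax)\,R_A\bigr]$, which exhibits the same eigenvalue-mixing effect and, for sufficiently symmetric augmentation laws, pushes $\kappa$ toward $1$ — the natural reading of ``regularity conditions on the augmentation''.
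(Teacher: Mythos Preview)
Your proposal is correct and follows essentially the same route as the paper: represent $\partial^2\tilde L_c(x)=\EE_\epsilon[\partial^2 L_c(x+\epsilon)]$, inherit the sandwich $\mu I\preceq\cdot\preceq LI$, and then use full support of the noise together with continuity of the Hessian at a point where the extremal bound is not tight to obtain strict improvement. The only differences are cosmetic---the paper argues directly in the Loewner order by splitting the integral over a neighbourhood where $\partial^2 L_c\succeq(\mu+c)I$ rather than invoking Jensen on $\lambda_{\min},\lambda_{\max}$, and it allows any mean-zero full-support noise rather than restricting to Gaussian---and you are in fact more careful than the paper in flagging that the pointwise strict drop need not survive passage to $\sup_x/\inf_x$ without an attainment hypothesis.
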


\paragraph{Ideal Augmentations for BLIP-2 Inversion} 
Our proposed BLIP-2 inversion posts new requirements for the ideal augmentations.  
We have conducted extensive ablation experiments to identify suitable augmentation techniques from conservative learning. 
Eventually, we employ \textit{random affine, random perspective, color-jitter, random erasing}, and \textit{Gaussian noise} and discard \textit{horizontal flipping} and \textit{random cropping}, due to a notable adverse impact on the final image 
(detailed results in \textbf{Appendix} \ref{appdx:aug_explore}). 
This is to be expected since, in our case, the considered semantic information is much more intricate than that in contrastive learning.  
For instance, some captions may contain ``left" or ``right" location information, which will be significantly altered by horizontal flipping. 
The ineffectiveness of these augmentations is also reflected in the BLIP-2 loss. 
Within the same batch of augmented images, {horizontal flipping} and {random cropping} result in larger loss values. 
Such issues related to data augmentation altering semantics have been widely studied in the context of contrastive learning \citep{chen2020simclr, tian2020goodviewforcl, kalantidis2020hardnegforcl, patrick2021comptransincl, chuang2022robustcl}.

In conclusion, with the aforementioned formulation and augmentation regularization, VLMs can be effectively inversed to generate images with both good visual quality and high alignment to given prompts. 
\textbf{Table} \ref{tab:overallresult} (``BLIP-2 INV") demonstrates SOTA alignment score on \textit{Attribute-binding} dataset \citep{huang2023t2i}.
However, since the pre-training of BLIP-2 is primarily a discriminative task, it is inevitable that a significant amount of image information is lost during its forward pass, making it challenging to recover this information through model inversion.
Additional help is required to achieve a high aesthetic quality.

\subsection{SDS for Improved Fidelity} 
To make our generated images more realistic, integrating the gradient provided by another model focusing on image fidelity can help. A natural choice is Score Distillation Sampling (SDS), an optimization method based on knowledge distillation \citep{poole2022dreamfusion} that has shown great ability in generating or editing images.
We first investigate the effectiveness of SDS as a standalone image generator and measure its performance in image-text matching. Additionally, we thoroughly explore how SDS can aid model inversion in generating controllable and plausible images.

\subsection{Delicate Balance}
Our method consists of two modules, with VLM taking the lead in generating condition-compliant images and SDS ensuring the fidelity and aesthetics of the images. 
Integrating them is not a straightforward task.
In our experiments, we observe that these two components tend to prioritize different aspects and may result in divergent optimization directions. BLIP-2 inversion is primarily concerned with semantic alignment, whereas SDS prioritizes image fidelity. 
SDS often misinterprets prompts, resulting in images with missing objects or misaligned attributes when dealing with compositional prompts. 
In contrast, BLIP-2 inversion is far superior in following the instructions.
The image evolution processes of two modules individually clearly illustrate these phenomena (see \textbf{Figure} \ref{fig:evo1} and \ref{fig:evo2} in \textbf{Appendix} \ref{appdix:blipSDSrole}).
Consequently, when BLIP-2 inversion and SDS collaborate, the gradients provided by these two modules reflect different objectives (examples shown in \textbf{Appendix} \ref{appdix:blipSDSrole} \textbf{Figure} \ref{fig:grad}).
This issue calls for carefully tuning the individual weights, which we discuss and demonstrate with results in the experiment section.

\begin{algorithm}[H]
\caption{Conditional Generation via VLM Inverion and SDS}
\label{alg:our_method}
\begin{algorithmic}
    \STATE {\bfseries Input:} A text prompt $\yb$, 
    \STATE {\bfseries Required:}
   a pre-trained VLM model (e.g., BLIP2) and the loss function $L$, a pre-trained Stable Diffusion model $\phi$, and a decoder $\mathrm{DEC}$. 
    \STATE {\bfseries Output:} Generated image $\xb$ following $\yb$.
    \STATE Set augmentation function $A$, learning rate $\beta$, weight $w_1$ for BLIP-2 and weight $w_2$ for SDS; EMA decay rate $\lambda$, EMA start iteration $e_{s}$ and EMA restart iteration $e_{rs}$.
    \STATE Initialize $\mathbf{z}_0 \gets$ sample from $\mathcal{N}(\mathbf{0}, \mathbf{I})$; $\zb_{0}^{ema}=\mathbf{z}_0$
   
    \FOR{$i=1$ {\bfseries to} $I$}
    \STATE $\mathbf{x}_i = \mathrm{DEC}(\mathbf{z}_{i-1})$
    \STATE $\nabla_{\mathbf{z}}^{'} \mathcal{L}_{SDS}(\phi, \mathbf{z}_{i-1}, \yb) = w_2 \nabla_{\mathbf{z}} \mathcal{L}_{SDS}(\phi, \mathbf{z}_{i-1}, \yb)$
    \STATE $\nabla_{\mathbf{z}}^{'} \tilde{L}_{\yb}(\mathbf{x}_i)$ = $w_1 \nabla_{\mathbf{z}} \tilde{L}_{\yb}(\mathbf{x}_i)$ $\cdot$ $\frac{||\nabla_{\mathbf{z}} \mathcal{L}_{SDS}(\phi, \mathbf{z}_{i-1}, \yb)||}{||\nabla_{\mathbf{z}} \tilde{L}_{\yb}(\mathbf{x}_i)||}$
    \STATE $\mathbf{z}_i$ $\gets$ Adam($\mathbf{z}_{i-1}$, $\beta$, $ \nabla_{\mathbf{z}}^{'} \tilde{L}_{\yb}(\mathbf{x}_i)$)
    \STATE $\mathbf{z}_i$ $\gets$ $\mathbf{z}_i - \beta$ $\nabla_{\mathbf{z}}^{'} \mathcal{L}_{SDS}(\phi, \mathbf{z}_{i-1}, \yb)$
    \STATE $\zb_{i}^{ema} \gets \lambda$ $\zb_{i-1}^{ema}$ + $(1-\lambda)$ $\zb_{i}^{ema}$
    \IF{$i\geq e_s$ and $(i-e_s) \% e_{rs} = 0 $}
    \STATE $\zb_{i} \gets \zb_{i}^{ema}$
    \ENDIF
    \ENDFOR
\STATE $\xb=\mathrm{DEC}(\mathbf{z})$
\STATE return $\xb$
\end{algorithmic}
\end{algorithm}

\textit{Exponential moving average (EMA) restart} strategy is further proposed to enhance the stability of the two modules' collaboration. 
EMA has been widely used in deep network optimization that includes two modules or branches \citep{tarvainen2017mean, he2020moco, Cai_2021_CVPR_emabn}, showing the strength to provide stable optimization and improved prediction.
Unlike the conventional EMA method, \textit{EMA-restart} involves replacing the original optimization target with its EMA version at a specified iteration location. 
Two extra hyperparameters are involved in \textit{EMA-restart}: starting with the \textit{start} iteration location, the parameters are replaced by their EMA versions, and then every other time after the \textit{restart} interval. 
Appropriate use of \textit{EMA-restart} can effectively enhance stability during the optimization process and improve the image-text matching degree.

The overall algorithm is depicted in \textbf{Algorithm} \ref{alg:our_method}.

\section{Experiments}
Implementation details can be found in \textbf{Appendix} \ref{app:exp_detail}. We mainly represent the experiment results, analysis, and ablation studies in this section.

\begin{table*}[!ht]
\centering
\centering
\caption{Image-text alignment evaluation on T2I-CompBench. The best results are marked in green, and the second best in blue. The baseline data are sourced from \cite{chen2023pixart}.}
\label{tab:overallresult}
\begin{sc}
\begin{small}
\begin{tabular}{lcccccc}
\toprule
\multicolumn{1}{c}{\multirow{2}{*}{\textbf{Model}}}   & \multicolumn{3}{c}{\textbf{Attribute Binding}}                                                     & \multicolumn{2}{c}{\textbf{Object Relationship}}                                   & \multirow{2}{*}{\textbf{Complex}}                                            \\ \cmidrule(r){2-4} \cmidrule(r){5-6}
\multicolumn{1}{c}{}                         & \multicolumn{1}{c}{\textbf{Color} $\uparrow$}           & \multicolumn{1}{c}{\textbf{Shape} $\uparrow$}                & \multicolumn{1}{c}{\textbf{Texture} $\uparrow$}          & \multicolumn{1}{c}{\textbf{Spatial} $\uparrow$}          & \multicolumn{1}{c}{\textbf{Non-spatial} $\uparrow$}                   \\ \midrule
Stable v1.4          & 0.3765 & 0.3576 & 0.4156 & 0.1246 & 0.3079 & 0.3080 \\

Composable v2 & 0.4063 & 0.3299 & 0.3644 & 0.0800 & 0.2980 & 0.2898 \\
Structured v2 & 0.4990 & 0.4218 & 0.4900 & 0.1386 & 0.3111 & 0.3355 \\
Attn-Exct v2 & 0.6400 & 0.4517 & 0.5963 & 0.1455 & 0.3199 & 0.3401 \\
GORS & 0.6603 & 0.4785 & 0.6287 & 0.1815 & 0.3193 & 0.3328 \\
DALLE-2 & 0.5750 & 0.5464 & 0.6374 & 0.1283 & 0.3043 & 0.3696 \\
SDXL & 0.6369 & 0.5408 & 0.5637 &\cellcolor{blue!25} 0.2032 & 0.3110 & \cellcolor{blue!25}0.4091 \\
PixART-$\alpha$ & 0.6886 & 0.5582 & 0.7044 &\cellcolor{green!25} 0.2082 & 0.3179 & \cellcolor{green!25}0.4117\\ 
DALLE-3 &0.8110 & \cellcolor{green!25}0.6750 & \cellcolor{green!25}0.8070 & - & - &- \\
\midrule
SDS  v1.5 & 0.3793 & 0.3914 & 0.4321 & 0.1261 & 0.3038 & 0.2773 \\
BLIP-2 Inv & \cellcolor{green!25}0.8639 &\cellcolor{blue!25} 0.6686 &\cellcolor{blue!25} 0.7311 & 0.1008  & \cellcolor{green!25}0.3260 & 0.3379 \\
\midrule
our method  &\cellcolor{blue!25} 0.8162 & 0.6209 & 0.7202 & 0.1506 &\cellcolor{blue!25} 0.3215 & 0.3529 \\
\bottomrule
\end{tabular}
\end{small}
\end{sc}
\end{table*}

\begin{figure*}[!ht]
\begin{center}
    \rotatebox{90}{\scriptsize{~~~~~~~~SD v1-5}}
    \subfigure{
        \begin{minipage}[t]{0.11\linewidth}
            \centering
            \includegraphics[width=1\linewidth]{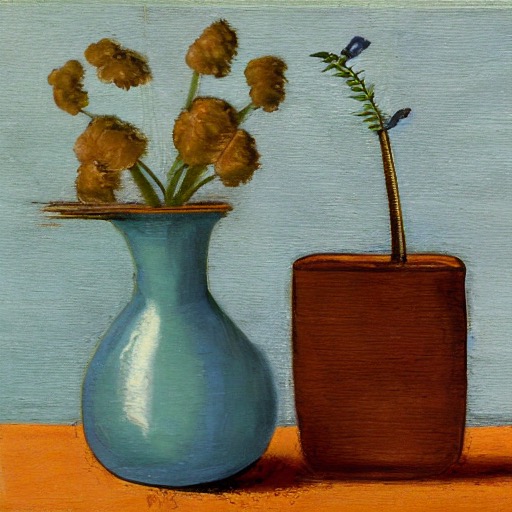}
        \end{minipage}
        \begin{minipage}[t]{0.11\linewidth}
            \centering
            \includegraphics[width=1\linewidth]{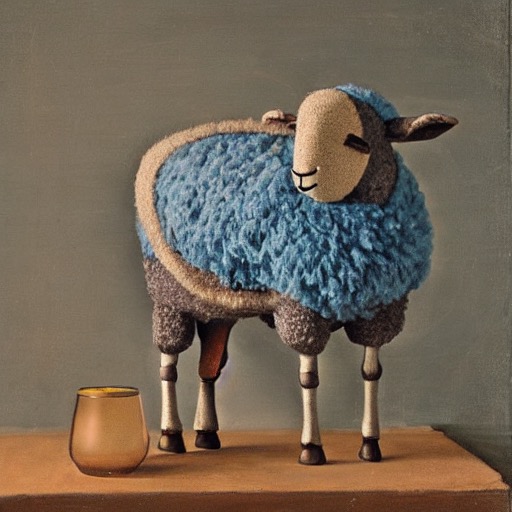}
        \end{minipage}
    }
    \subfigure{
        \begin{minipage}[t]{0.11\linewidth}
            \centering
            \includegraphics[width=1\linewidth]{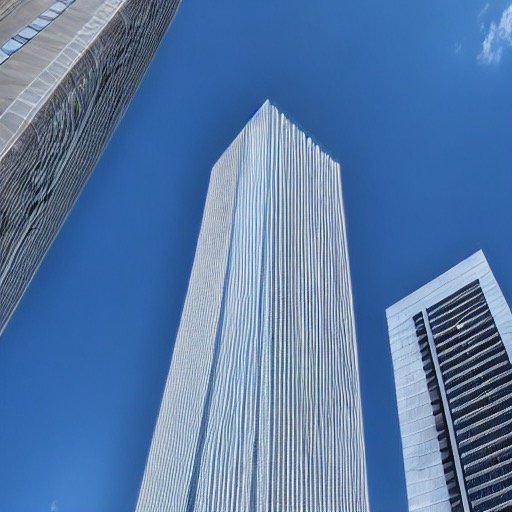}
        \end{minipage}
        \begin{minipage}[t]{0.11\linewidth}
            \centering
            \includegraphics[width=1\linewidth]{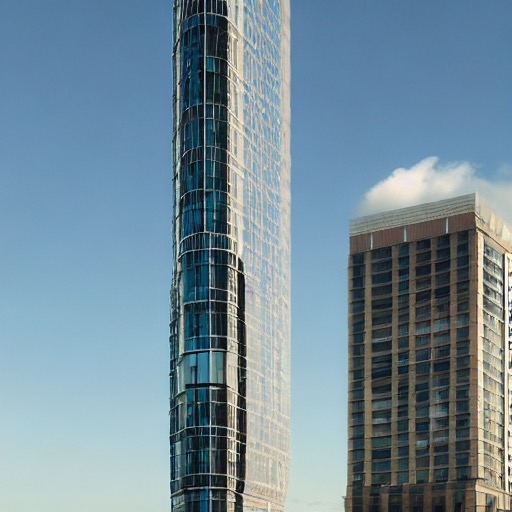}
        \end{minipage}
    }
    \subfigure{
        \begin{minipage}[t]{0.11\linewidth}
            \centering
            \includegraphics[width=1\linewidth]{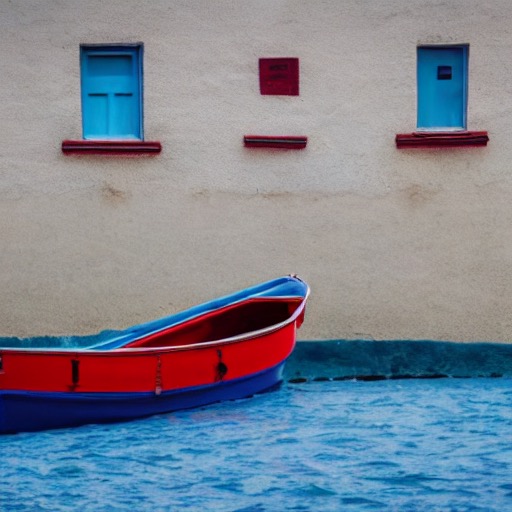}
            \end{minipage}
        \begin{minipage}[t]{0.11\linewidth}
            \centering
            \includegraphics[width=1\linewidth]{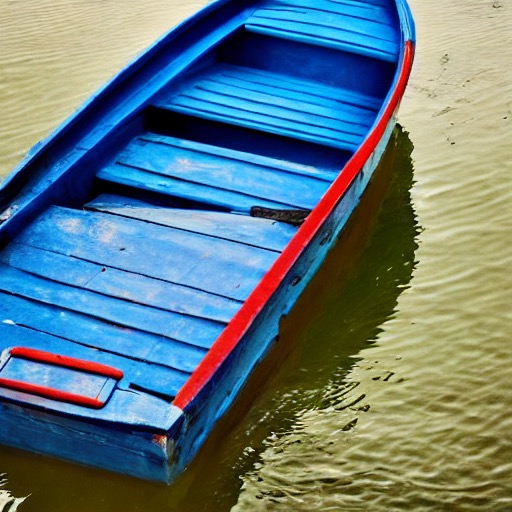}
        \end{minipage}
    }
    \subfigure{
        \begin{minipage}[t]{0.11\linewidth}
            \centering
            \includegraphics[width=1\linewidth]{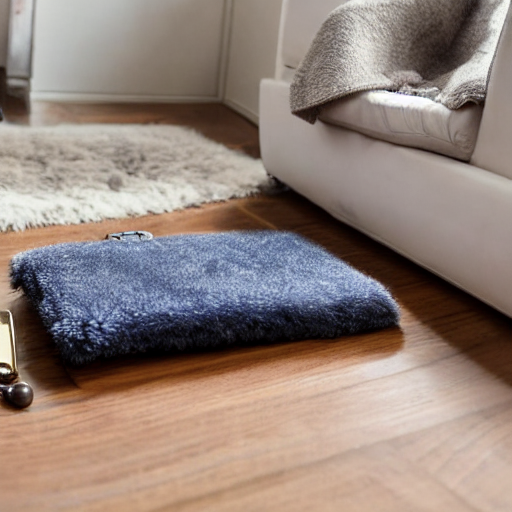}
            \end{minipage}
        \begin{minipage}[t]{0.11\linewidth}
            \centering
            \includegraphics[width=1\linewidth]{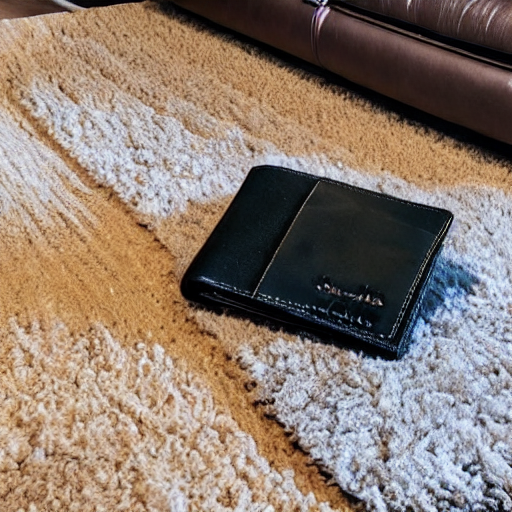}
        \end{minipage}
    }
\end{center}

\begin{center}
    \rotatebox{90}{\scriptsize{~~~~~~~Struc-Diff}}
    \subfigure{
        \begin{minipage}[t]{0.11\linewidth}
            \centering
            \includegraphics[width=1\linewidth]{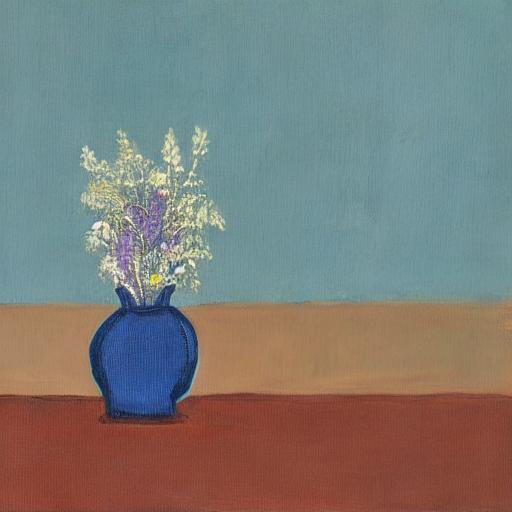}
        \end{minipage}
        \begin{minipage}[t]{0.11\linewidth}
            \centering
            \includegraphics[width=1\linewidth]{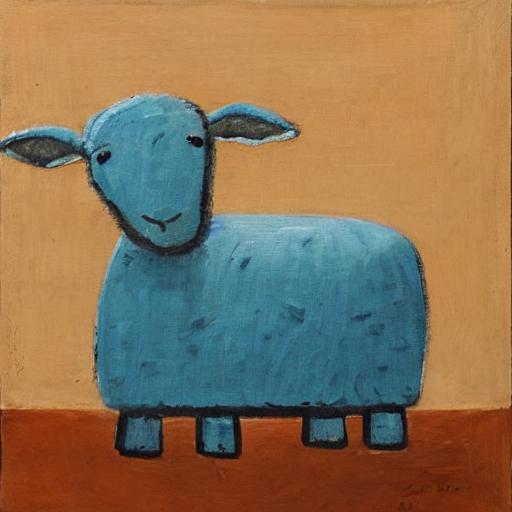}
        \end{minipage}
    }
    \subfigure{
        \begin{minipage}[t]{0.11\linewidth}
            \centering
            \includegraphics[width=1\linewidth]{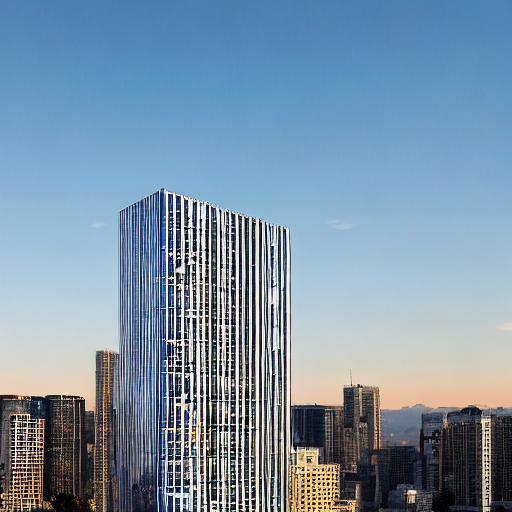}
        \end{minipage}
        \begin{minipage}[t]{0.11\linewidth}
            \centering
            \includegraphics[width=1\linewidth]{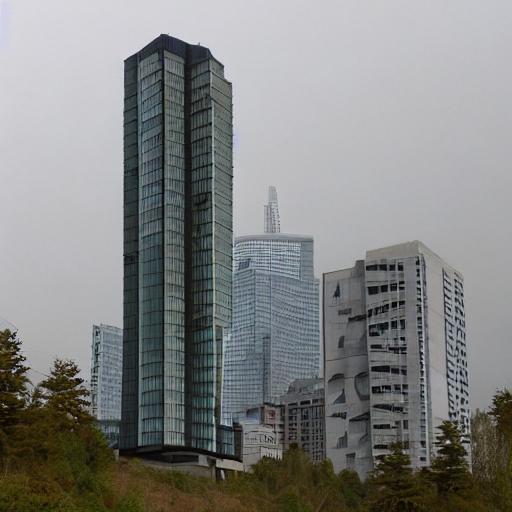}
        \end{minipage}
    }
    \subfigure{
        \begin{minipage}[t]{0.11\linewidth}
            \centering
            \includegraphics[width=1\linewidth]{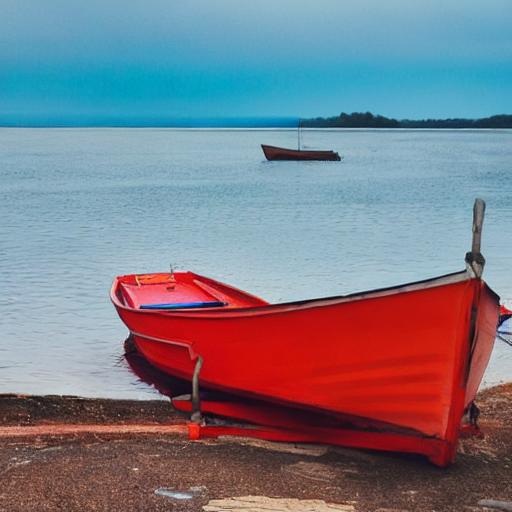}
            \end{minipage}
        \begin{minipage}[t]{0.11\linewidth}
            \centering
            \includegraphics[width=1\linewidth]{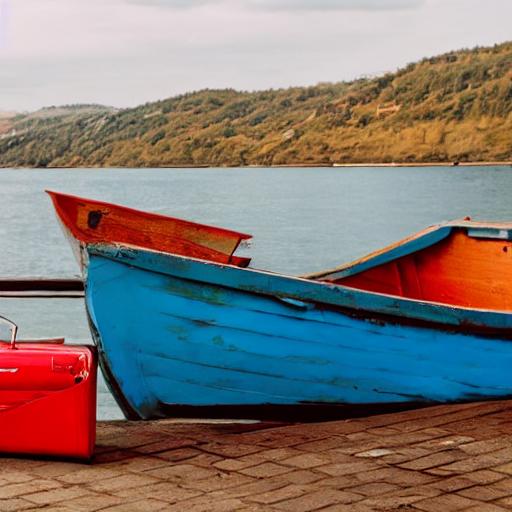}
        \end{minipage}
    }
    \subfigure{
        \begin{minipage}[t]{0.11\linewidth}
            \centering
            \includegraphics[width=1\linewidth]{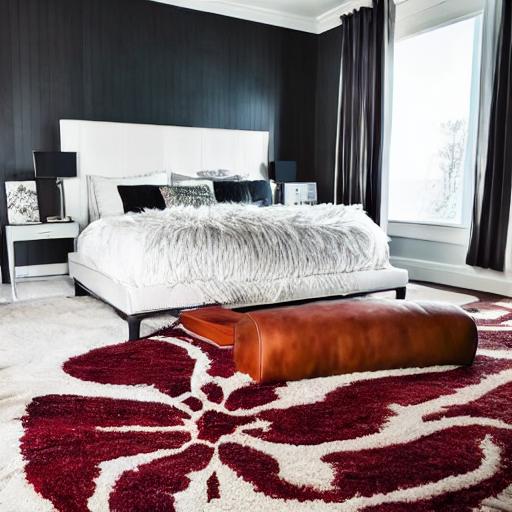}
            \end{minipage}
        \begin{minipage}[t]{0.11\linewidth}
            \centering
            \includegraphics[width=1\linewidth]{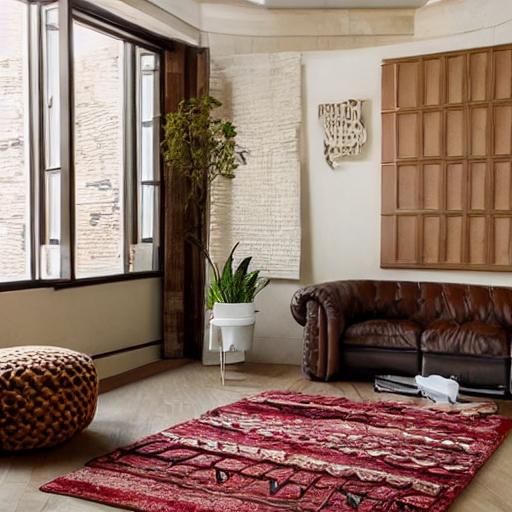}
        \end{minipage}
    }
\end{center}

\begin{center}
    \rotatebox{90}{\scriptsize{~~~~~~~~Attn-Exct}}
    \subfigure{
        \begin{minipage}[t]{0.11\linewidth}
            \centering
            \includegraphics[width=1\linewidth]{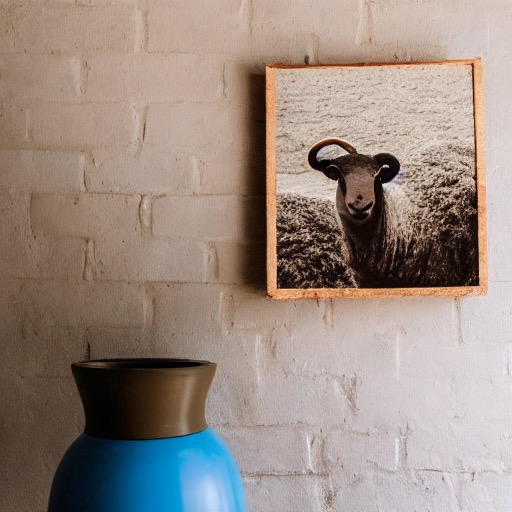}
        \end{minipage}
        \begin{minipage}[t]{0.11\linewidth}
            \centering
            \includegraphics[width=1\linewidth]{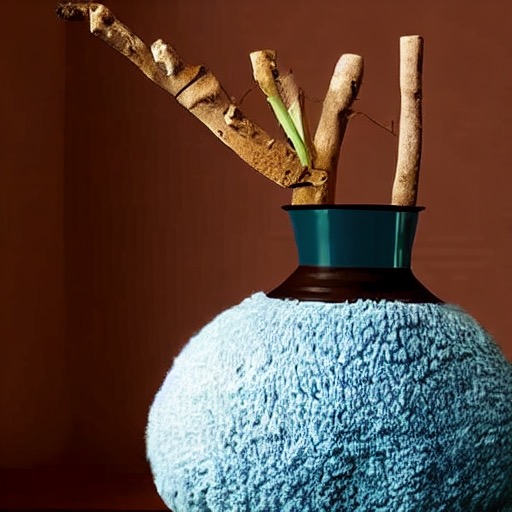}
        \end{minipage}
    }
    \subfigure{
        \begin{minipage}[t]{0.11\linewidth}
            \centering
            \includegraphics[width=1\linewidth]{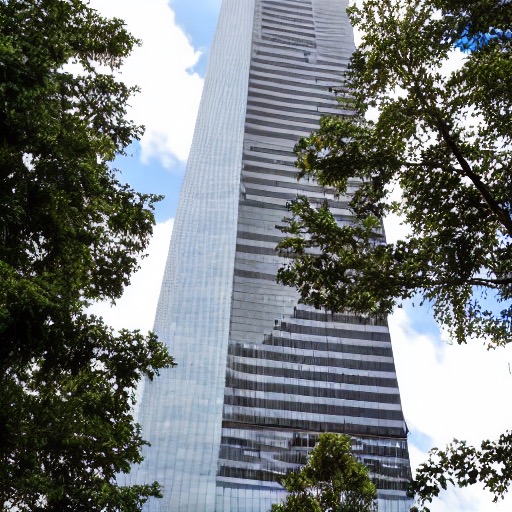}
        \end{minipage}
        \begin{minipage}[t]{0.11\linewidth}
            \centering
            \includegraphics[width=1\linewidth]{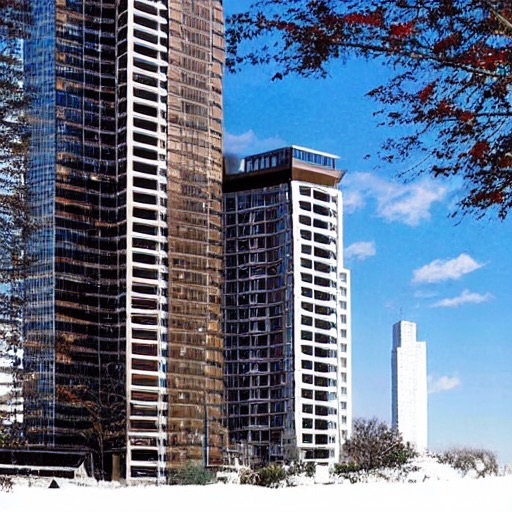}
        \end{minipage}
    }
    \subfigure{
        \begin{minipage}[t]{0.11\linewidth}
            \centering
            \includegraphics[width=1\linewidth]{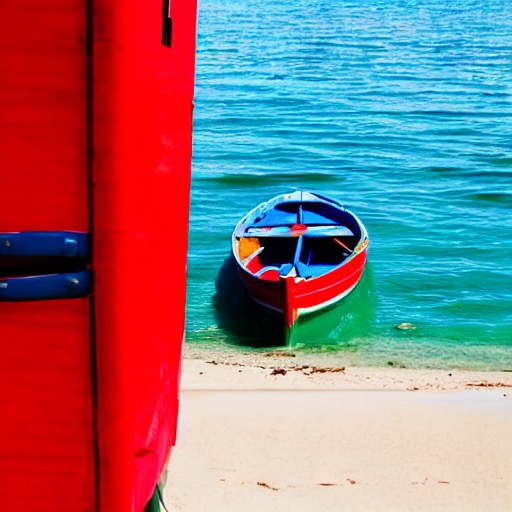}
            \end{minipage}
        \begin{minipage}[t]{0.11\linewidth}
            \centering
            \includegraphics[width=1\linewidth]{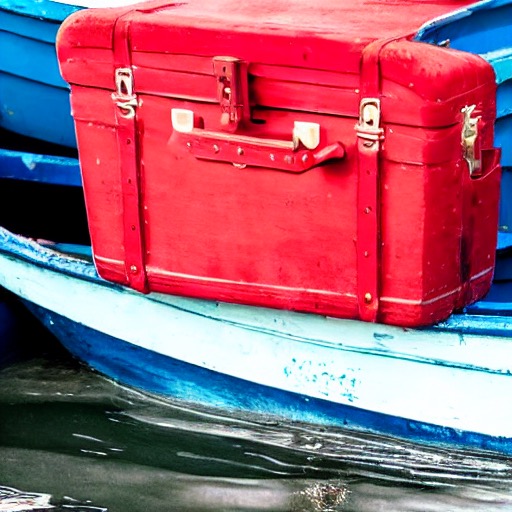}
        \end{minipage}
    }
    \subfigure{
        \begin{minipage}[t]{0.11\linewidth}
            \centering
            \includegraphics[width=1\linewidth]{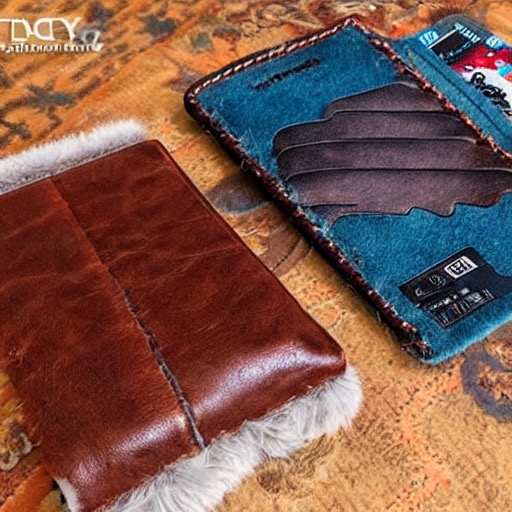}
            \end{minipage}
        \begin{minipage}[t]{0.11\linewidth}
            \centering
            \includegraphics[width=1\linewidth]{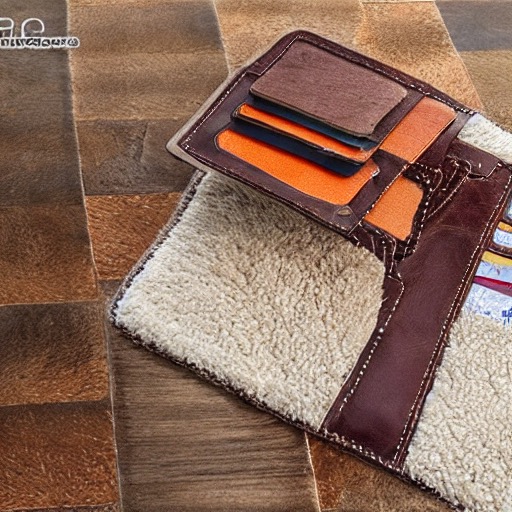}
        \end{minipage}
    }
\end{center}

\setcounter{subfigure}{0}
\begin{center}
    \rotatebox{90}{\scriptsize{~~~~~~Our method}}
    \subfigure[A blue sheep and a brown vase]{
        \begin{minipage}[t]{0.11\linewidth}
            \centering
            \includegraphics[width=1\linewidth]{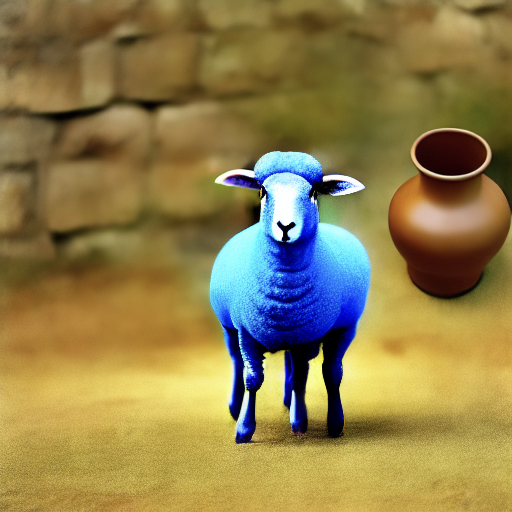}
        \end{minipage}
        \begin{minipage}[t]{0.11\linewidth}
            \centering
            \includegraphics[width=1\linewidth]{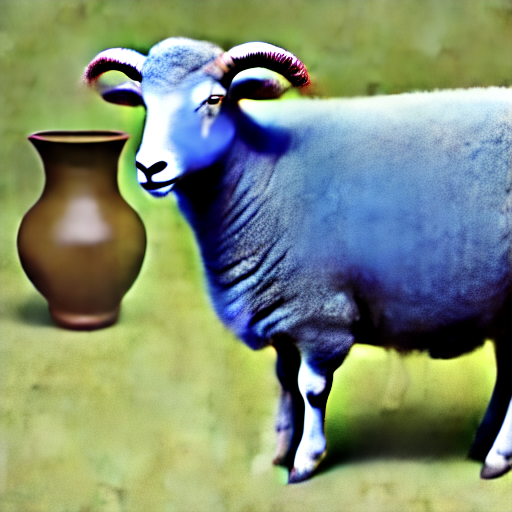}
        \end{minipage}
    }
    \subfigure[A high skyscraper and a small cabin]{
        \begin{minipage}[t]{0.11\linewidth}
            \centering
            \includegraphics[width=1\linewidth]{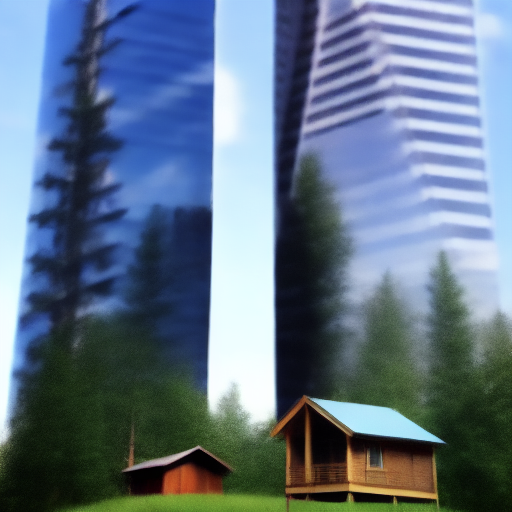}
        \end{minipage}
        \begin{minipage}[t]{0.11\linewidth}
            \centering
            \includegraphics[width=1\linewidth]{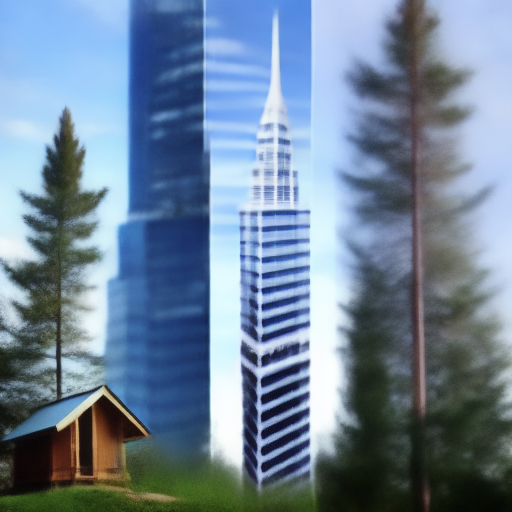}
        \end{minipage}
        }
    \subfigure[A blue boat and a red suitcase]{
        \begin{minipage}[t]{0.11\linewidth}
            \centering
            \includegraphics[width=1\linewidth]{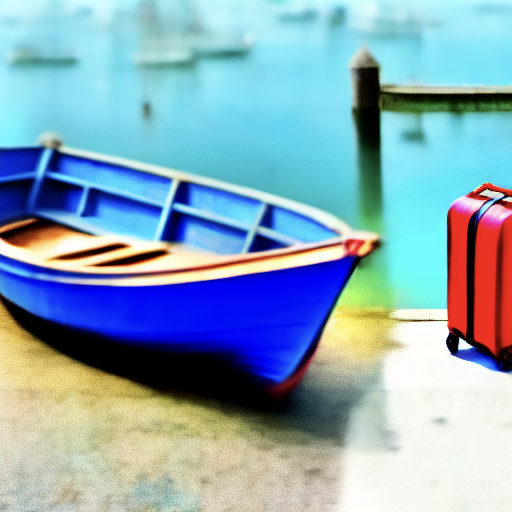}
        \end{minipage}
            \begin{minipage}[t]{0.11\linewidth}
            \centering
            \includegraphics[width=1\linewidth]{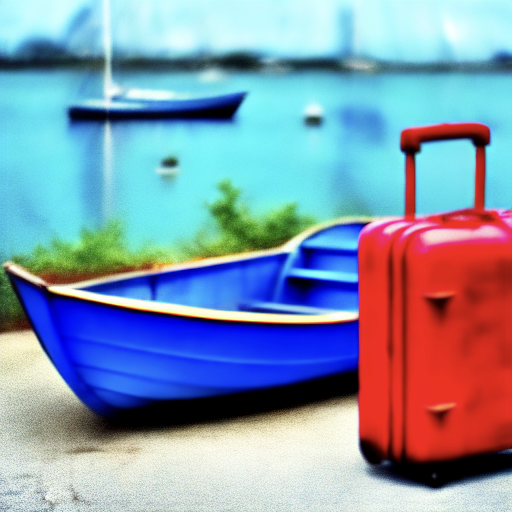}
        \end{minipage}
        }
    \subfigure[A fluffy rug and a leather wallet]{
        \begin{minipage}[t]{0.11\linewidth}
            \centering
            \includegraphics[width=1\linewidth]{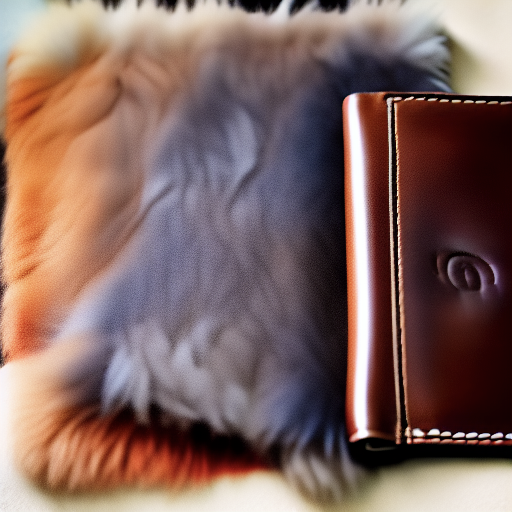}
        \end{minipage}
            \begin{minipage}[t]{0.11\linewidth}
            \centering
            \includegraphics[width=1\linewidth]{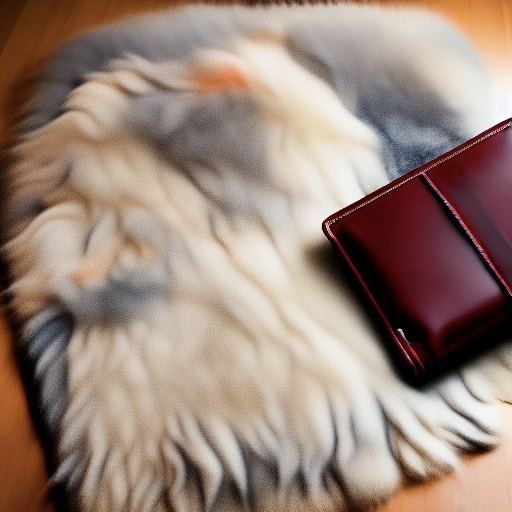}
        \end{minipage}
        }
\end{center}
\caption{Qualitative Comparison using prompts from \textit{Attribute-binding} of T2I-CompBench \citep{huang2023t2i}. We generate two images for each prompt with the same two random seeds for all methods.}
\label{fig:quali}
\end{figure*}

\begin{table*}[ht]
\caption{Results with different EMA start iterations $e_s$ and restart intervals $e_{rs}$. Baseline results include without EMA: 67.43 (75.76/48.36/78.71); with original EMA: 67.22 (76.98/47.36/77.32).}
\label{tab:ema}
\centering
\begin{sc}
\begin{small}
\begin{tabular}{c|cccc}
\toprule
\multicolumn{1}{l}{$e_{rs}$\textbackslash{} $e_s$} & 20 & 40 & 60 & 80\\ \midrule
10                                               & \begin{tabular}[c]{@{}c@{}}65.49\\ (76.86/46.25/73.35)\end{tabular} & \begin{tabular}[c]{@{}c@{}}67.75\\ (77.24/49.52/76.50)\end{tabular} & \begin{tabular}[c]{@{}c@{}}65.89\\ (75.18/47.06/75.41)\end{tabular} & \begin{tabular}[c]{@{}c@{}}67.15\\ (78.10/45.19/78.16)\end{tabular} \\ \midrule
20                                               & \begin{tabular}[c]{@{}c@{}}67.96\\ (81.70/49.91/72.27)\end{tabular} & \begin{tabular}[c]{@{}c@{}}65.76\\ (77.06/43.04/72.18)\end{tabular} & \begin{tabular}[c]{@{}c@{}}62.61\\ (73.00/46.68/78.14)\end{tabular} & \begin{tabular}[c]{@{}c@{}}\textbf{71.77}\\ (\textbf{79.47/50.56/85.29})\end{tabular} \\ \midrule
40                                               & \begin{tabular}[c]{@{}c@{}}64.55\\ (72.53/48.51/77.61)\end{tabular} & \begin{tabular}[c]{@{}c@{}}65.11\\ (69.89/48.77/76.68)\end{tabular} & \begin{tabular}[c]{@{}c@{}}\textbf{71.86}\\ (\textbf{80.36/49.06/86.16})\end{tabular} & \begin{tabular}[c]{@{}c@{}}66.73\\ (76.20/38.20/75.99)\end{tabular} \\ \midrule
60                                               & \begin{tabular}[c]{@{}c@{}}\textbf{71.24}\\ (\textbf{68.90/50.43/84.40})\end{tabular} & \begin{tabular}[c]{@{}c@{}}\textbf{72.47}\\ (\textbf{83.99/53.41/80.01})\end{tabular} & \begin{tabular}[c]{@{}c@{}}67.14\\ (78.52/46.39/76.50)\end{tabular} & \begin{tabular}[c]{@{}c@{}}66.73\\ (76.20/38.20/75.99)\end{tabular} \\ \bottomrule
\end{tabular}
\end{small}
\end{sc}
\end{table*}

\subsection{Quantitative Results}
We evaluate our method on T2I-CompBench \citep{huang2023t2i}, a comprehensive benchmark for open-world compositional text-to-image generation consisting of attribute binding, object relationships, and complex compositions. The results are presented in \textbf{Table} \ref{tab:overallresult}. 

\subsection{Qualitative Comparisons}
We compare our method with Stable Difussion v1-5, StructureDiffusion \citep{feng2022strucdiffusion} and Attend-and-Excite \citep{phung2023att-refocus}.
The qualitative results are present in \textbf{Figure} \ref{fig:quali}. Clearly, baseline methods all tend to neglect objects in the prompt.
Additionally, we provide more examples of images generated by our method in \textbf{Appendix} \ref{appdix:moreimages}, including prompts from \textit{Object Relationship} and \textit{Complex} sub-datasets from T2I-CompBench. As can be seen, our method is also able to generate faithful images for prompts that describe actions, like ``A person is walking with a friend and catching up on their lives", and complex prompts that include multiple attributes for each object, like ``The bold, striking patterns of the tiger's stripes blended seamlessly with the dappled light of the jungle, a creature of stealth and beauty" (see \textbf{Figure} \ref{fig:cherry_com}).

\subsection{Ablation Studies}
Our method comprises two pivotal components: a pre-trained BLIP-2 model and SDS with SD. Among these, BLIP-2 assumes a more critical role in the image generation process, ensuring that the generated images faithfully adhere to the provided texture instructions. To harmoniously integrate these two components, extensive exploratory experiments were conducted to identify the optimal combination. In our ablation studies, we randomly sample 60 images, with 20 images selected from each of the three \textit{Attribute-binding} datasets (color/shape/texture) within T2I-CompBench for each experiment.

\paragraph{SDS Weight}
We adaptively scale the BLIP-2 gradient so that its norm is twice that of the SDS gradient and conduct the exploration to determine the most suitable weights for SDS. 
We observe that higher values of the CFG factor and SDS weight $w_2$ can produce clearer and more realistic images but at the cost of reduced image-text alignment. Conversely, extremely low SDS weights, such as those starting below 800, struggle to generate clear and natural-looking images. Similarly, when CFG values are low, for instance, at 10, the image quality and alignment are both harmed. The most suitable SDS weight falls within the range of 800 to 1000, ideally paired with CFG values of 20 or 30. The quantitative result is presented in \textbf{Appendix} \ref{appdix:moreablation}.
We further explored the scaled SDS weight and found a sweet point at starting with 800 and gradually decreasing to 400, with CFG equal to 30 (see \textbf{Table} \ref{tab:sds_weight_scaling}).

\paragraph{BLIP-2 Weight}
Since we optimize the image with the gradient of BLIP-2 and SDS separately, we also investigate the frequency and weight of BLIP-2 $w_1$ at each iteration. The results are shown in \textbf{Table} \ref{tab:blip_weight}. The BLIP gradient norm should always be larger than SDS to ensure a better alignment with the text, yet an overemphasis on it can lead to the generation of noisy and unrealistic images, which ultimately reduces the alignment score.

\begin{table}[ht]
\caption{Results of scaling SDS weight $w_2$ with CFG scale of 20 \& 30}
\label{tab:sds_weight_scaling}
\centering
\scalebox{0.75}{
\begin{tabular}{c|ccc}
\toprule
{cfg \textbackslash{}$ w_2$} &{800 $\to$ 400} & { 1000 $\to$ 500} & { 1500 $\to$ 500} \\ \midrule
20 & \begin{tabular}[c]{@{}c@{}}63.77\\ (71.32/44.78/75.21)\end{tabular} & \begin{tabular}[c]{@{}c@{}}64.12\\ (74.36/48.49/70.53)\end{tabular} & \begin{tabular}[c]{@{}c@{}}60.67\\ (71.23/42.89/68.12)\end{tabular} \\ \midrule
30 & \begin{tabular}[c]{@{}c@{}}\textbf{69.43}\\ (\textbf{79.98/49.08/79.23})\end{tabular} & \begin{tabular}[c]{@{}c@{}}\textbf{67.43}\\ (\textbf{75.76/48.36/78.17})\end{tabular} & \begin{tabular}[c]{@{}c@{}}63.34\\ (74.00/43.19/72.89)\end{tabular} \\ 
\bottomrule
\end{tabular}
}
\end{table}

\begin{table}[ht]
\caption{The influence of BLIP-2 weight $w_1$ and frequency.}
\label{tab:blip_weight}
\scalebox{0.76}{
\begin{tabular}{c|ccc}
\hline
{f \textbackslash{} $w_1$} & 1 & 2 & 3\\ \hline
1 & \begin{tabular}[c]{@{}c@{}}59.97\\ (75.08/39.44/65.37)\end{tabular} & \begin{tabular}[c]{@{}c@{}}\textbf{67.43}\\ (\textbf{75.76/48.36/78.17})\end{tabular} & \begin{tabular}[c]{@{}c@{}}62.36\\ (65.12/46.91/75.04)\end{tabular} \\ \hline
2  & \begin{tabular}[c]{@{}c@{}}61.55\\ (72.96/46.66/65.02)\end{tabular} & \begin{tabular}[c]{@{}c@{}}64.29\\ (75.52/43.77/73.59)\end{tabular} & -   \\ \hline
\end{tabular}
}
\end{table}

\paragraph{EMA-restart}
We explore the effectiveness of the EMA-restart strategy with different \textit{start} iteration locations $e_s$ and \textit{restart} intervals $e_{rs}$, and the results are presented in \textbf{Table} \ref{tab:ema}.
In general, we find that combinations of \textit{start} $\&$ \textit{restart} that resulted in 2$\sim$3 times EMA replacements can significantly benefit the result.

\textbf{Random Noise}
Our experiments indicate that introducing random noise to $\zb$ after each optimization iteration enhances result robustness. The results of different random noise scales are displayed in \textbf{Appendix} \ref{appdix:moreablation} \textbf{Table} \ref{tab_noise}.

\section{Related Work}
\paragraph{Controllable Image Generation.}

To facilitate flexible image generation capable of meeting diverse requirements, achieving controllable image generation has become a prominent focus in recent times. A prevalent approach is guidance generation. 
This kind of method obviates the necessity of training a diffusion model from scratch with specific conditions and instead offers direction by pre-existing specialized models or loss functions for the reverse process of the diffusion model.
An early attempt in this direction was GLIDE \citep{nichol2021glide} and classifier-free guidance \citep{ho2022classifier}, which facilitate a diffusion model in generating images that align with textual descriptions.
To satisfy arbitrary requirements like textual descriptions, layouts, or segmentation, Universal Guidance \citep{bansal2023universal} is proposed. It leverages pre-existing models to offer iterative directions during the reverse process of a diffusion model.
Specifically, to achieve control over multi-faced objects within the prompts, current methods typically involve the incorporation of prior layout information, either by utilizing the bound boxes as guidance \citep{lian2023llmground} or injecting grounding information into new trainable layers \citep{li2023gligen,chen2023reasonlayut}. Furthermore, it has been observed that cross-attention primarily governs the handling of object-related information. Consequently, various approaches have been developed that involve modifying cross-attention mechanisms to ensure that the diffusion model sufficiently attends to all objects specified in the prompts \cite{feng2022strucdiffusion, kim2023densediff,chefer2023attendandexcite}. Notably, approaches that combine bounding boxes with attention control have demonstrated improved performance in this regard \citep{phung2023att-refocus, wang2023compositional-attcontrol}.

What sets our method apart is the central role played by a discriminative Vision-Language Model (VLM) during the image synthesis process, rather than relying on the generative model. Additionally, our approach is entirely training-free and does not necessitate the inclusion of extra information, such as layout details or object indices.

\paragraph{Image Generation via Model Inversion}
Model inversion refers to the invert process of general model training, i.e., optimizing the input, which is initially randomly initialized, while keeping the well-trained model parameters unchanged \citep{mahendran2015imageinvert}. An early endeavor in this direction was DeepDream \citep{deepdream}, which sought to create images corresponding to specific classes given a classification model, by producing high responses for specific classes in the output layer of the model. However, the reverse classification process posed significant challenges. 
During the optimization process of model inversion, it's easy to get stuck in local optima and end up with very unrealistic images that can be regarded as adversarial examples \citep{goodfellow2014explaining, wang2022traditionalCaG}.
Subsequent methods, such as DeepInversion \citep{yin2020deepinversion} and CaG \citep{wang2022traditionalCaG}, introduced various regularization techniques to improve the identifiability of generated images. However, the generated images were still far from natural and lacked coherency. Notably, VQGAN-CLIP \citep{crowson2022vqganclip} emerged as a successful example within this category of methods. It generates images by inversely applying a trained VQGAN model \citep{esser2021taming} using a loss function that measures the similarity of image and text embeddings from a CLIP model \citep{radford2021clip} and incorporates augmentation regularization. Nevertheless, VQGAN-CLIP did not delve into the impact of various augmentation methods and the mechanism underlying the effectiveness of augmentation regularization, even in subsequent studies.

We are the first to comprehensively investigate the underlying mechanism of augmentation regularization, and how to choose the appropriate augmentation techniques.

\section{Conclusion and Discussion}

In this study, we introduce a novel framework for image generation that enhances controllability. Our approach is rooted in a fresh perspective on comprehending the recent advancements in text-to-image synthesis models like DALLE3 as a learned inversion function of the VLM.
Subsequently, we propose a direct inversion of the VLM through optimization, harnessing the full potential of text and image alignment inherent in VLMs. We elucidate the effectiveness of \textit{augmentation regularization} that facilitates generating faithful images via VLM inversion.
Furthermore, we enhance our method by incorporating SDS and thoroughly explore the effective synergy between it and VLM, achieving correctly generating fidelity images.

Nonetheless, our method does have limitations. 
For instance, the stability of the generation process is not as good as SOTA DPMs, due to the optimization nature.  Occasionally mismatches or unrealistic images still occur. 
In this work, we only demonstrated working with the BLIP-2 model as a referee, which we found to struggle with spatial information. Instructions involving spatial relationships such as ``to the left/right", ``on the top/bottom", etc., can be challenging to follow. 
Our work can be further strengthened if we can incorporate multiple referees with diverse knowledge, e.g., grounding VQA model, detection models, segmentation models, etc. 
Such a model zoo setting has been demonstrated effective in domain generalization \citep{shu2021zoo,dong2022zood, chen2023explore}. As the referees get more sophisticated, memory-efficient optimization \citep{anil2019memory, baydin2022gradients, malladi2024fine} can be utilized to reduce the computation load and promote scalability.

Despite these limitations, we are introducing a novel direction and highlighting its significant potential. For all modalities, evaluating the generated samples is a challenging task. For example, we have FID \citep{heusel2017gans_fid} for images, but such a well-accepted criterion is missing for 3D generation. Nevertheless, as long as there exist powerful discriminative models that can tell whether the generated samples are good or not, they can be utilized in a training-free fashion by extending our method to achieve a significant boost in conditional generation across every modality.
Our goal is to inspire further investigations along this path, revealing the unexplored capabilities of discriminative models in generative tasks.
Currently, we have only explored basic models in image generation, and we hope to inspire more future works to leverage more powerful discriminative and language models, thereby developing more robust capabilities.

\bibliography{reference}

\begin{thebibliography}{72}
\providecommand{\natexlab}[1]{#1}
\providecommand{\url}[1]{\texttt{#1}}
\expandafter\ifx\csname urlstyle\endcsname\relax
  \providecommand{\doi}[1]{doi: #1}\else
  \providecommand{\doi}{doi: \begingroup \urlstyle{rm}\Url}\fi

\bibitem[Adams \& Fournier(2003)Adams and Fournier]{adams2003sobolev}
Adams, R.~A. and Fournier, J.~J.
\newblock \emph{Sobolev Spaces}, volume 140.
\newblock Academic Press, 2003.

\bibitem[Alexander~Mordvintsev \& Tyka(2015)Alexander~Mordvintsev and Tyka]{deepdream}
Alexander~Mordvintsev, C.~O. and Tyka, M.
\newblock Inceptionism: Going deeper into neural networks, 2015.
\newblock URL \url{https://blog.research.google/2015/06/inceptionism-going-deeper-into-neural.html}.

\bibitem[Anil et~al.(2019)Anil, Gupta, Koren, and Singer]{anil2019memory}
Anil, R., Gupta, V., Koren, T., and Singer, Y.
\newblock Memory efficient adaptive optimization.
\newblock \emph{Advances in Neural Information Processing Systems}, 32, 2019.

\bibitem[Arrieta et~al.(2020)Arrieta, D{\'\i}az-Rodr{\'\i}guez, Del~Ser, Bennetot, Tabik, Barbado, Garc{\'\i}a, Gil-L{\'o}pez, Molina, Benjamins, et~al.]{arrieta2020explainable}
Arrieta, A.~B., D{\'\i}az-Rodr{\'\i}guez, N., Del~Ser, J., Bennetot, A., Tabik, S., Barbado, A., Garc{\'\i}a, S., Gil-L{\'o}pez, S., Molina, D., Benjamins, R., et~al.
\newblock Explainable artificial intelligence (xai): Concepts, taxonomies, opportunities and challenges toward responsible ai.
\newblock \emph{Information fusion}, 58:\penalty0 82--115, 2020.

\bibitem[Bansal et~al.(2023)Bansal, Chu, Schwarzschild, Sengupta, Goldblum, Geiping, and Goldstein]{bansal2023universal}
Bansal, A., Chu, H.-M., Schwarzschild, A., Sengupta, S., Goldblum, M., Geiping, J., and Goldstein, T.
\newblock Universal guidance for diffusion models.
\newblock In \emph{Proceedings of the IEEE/CVF Conference on Computer Vision and Pattern Recognition}, pp.\  843--852, 2023.

\bibitem[Baydin et~al.(2022)Baydin, Pearlmutter, Syme, Wood, and Torr]{baydin2022gradients}
Baydin, A.~G., Pearlmutter, B.~A., Syme, D., Wood, F., and Torr, P.
\newblock Gradients without backpropagation.
\newblock \emph{arXiv preprint arXiv:2202.08587}, 2022.

\bibitem[Bertsekas(2016)]{bertsekas2016nonlinear}
Bertsekas, D.
\newblock \emph{Nonlinear Programming}.
\newblock Athena scientific optimization and computation series. Athena Scientific, 2016.

\bibitem[Betker et~al.(2023)Betker, Goh, Jing, Brooks, Wang, Li, Ouyang, Zhuang, Lee, Guo, et~al.]{betker2023dalle3}
Betker, J., Goh, G., Jing, L., Brooks, T., Wang, J., Li, L., Ouyang, L., Zhuang, J., Lee, J., Guo, Y., et~al.
\newblock Improving image generation with better captions.
\newblock \emph{Computer Science. https://cdn. openai. com/papers/dall-e-3. pdf}, 2023.

\bibitem[Cai et~al.(2021)Cai, Ravichandran, Maji, Fowlkes, Tu, and Soatto]{Cai_2021_CVPR_emabn}
Cai, Z., Ravichandran, A., Maji, S., Fowlkes, C., Tu, Z., and Soatto, S.
\newblock Exponential moving average normalization for self-supervised and semi-supervised learning.
\newblock In \emph{Proceedings of the IEEE/CVF Conference on Computer Vision and Pattern Recognition (CVPR)}, pp.\  194--203, June 2021.

\bibitem[Chefer et~al.(2023)Chefer, Alaluf, Vinker, Wolf, and Cohen-Or]{chefer2023attendandexcite}
Chefer, H., Alaluf, Y., Vinker, Y., Wolf, L., and Cohen-Or, D.
\newblock Attend-and-excite: Attention-based semantic guidance for text-to-image diffusion models.
\newblock \emph{ACM Transactions on Graphics (TOG)}, 42\penalty0 (4):\penalty0 1--10, 2023.

\bibitem[Chen et~al.(2023{\natexlab{a}})Chen, Yu, Ge, Yao, Xie, Wu, Wang, Kwok, Luo, Lu, et~al.]{chen2023pixart}
Chen, J., Yu, J., Ge, C., Yao, L., Xie, E., Wu, Y., Wang, Z., Kwok, J., Luo, P., Lu, H., et~al.
\newblock Pixart-$\alpha$: Fast training of diffusion transformer for photorealistic text-to-image synthesis.
\newblock \emph{arXiv preprint arXiv:2310.00426}, 2023{\natexlab{a}}.

\bibitem[Chen et~al.(2020)Chen, Kornblith, Norouzi, and Hinton]{chen2020simclr}
Chen, T., Kornblith, S., Norouzi, M., and Hinton, G.
\newblock A simple framework for contrastive learning of visual representations.
\newblock In \emph{International conference on machine learning}, pp.\  1597--1607. PMLR, 2020.

\bibitem[Chen et~al.(2023{\natexlab{b}})Chen, Liu, Yang, Yuan, You, Liu, and Yang]{chen2023reasonlayut}
Chen, X., Liu, Y., Yang, Y., Yuan, J., You, Q., Liu, L.-P., and Yang, H.
\newblock Reason out your layout: Evoking the layout master from large language models for text-to-image synthesis.
\newblock \emph{arXiv preprint arXiv:2311.17126}, 2023{\natexlab{b}}.

\bibitem[Chen et~al.(2023{\natexlab{c}})Chen, Hu, Zhou, Li, and Ma]{chen2023explore}
Chen, Y., Hu, T., Zhou, F., Li, Z., and Ma, Z.-M.
\newblock Explore and exploit the diverse knowledge in model zoo for domain generalization.
\newblock In \emph{International Conference on Machine Learning}, pp.\  4623--4640. PMLR, 2023{\natexlab{c}}.

\bibitem[Chuang et~al.(2022)Chuang, Hjelm, Wang, Vineet, Joshi, Torralba, Jegelka, and Song]{chuang2022robustcl}
Chuang, C.-Y., Hjelm, R.~D., Wang, X., Vineet, V., Joshi, N., Torralba, A., Jegelka, S., and Song, Y.
\newblock Robust contrastive learning against noisy views.
\newblock In \emph{Proceedings of the IEEE/CVF Conference on Computer Vision and Pattern Recognition}, pp.\  16670--16681, 2022.

\bibitem[Cohen et~al.(2019)Cohen, Rosenfeld, and Kolter]{cohen2019certified}
Cohen, J., Rosenfeld, E., and Kolter, Z.
\newblock Certified adversarial robustness via randomized smoothing.
\newblock In \emph{international conference on machine learning}, pp.\  1310--1320. PMLR, 2019.

\bibitem[Crowson et~al.(2022)Crowson, Biderman, Kornis, Stander, Hallahan, Castricato, and Raff]{crowson2022vqganclip}
Crowson, K., Biderman, S., Kornis, D., Stander, D., Hallahan, E., Castricato, L., and Raff, E.
\newblock Vqgan-clip: Open domain image generation and editing with natural language guidance.
\newblock In \emph{European Conference on Computer Vision}, pp.\  88--105. Springer, 2022.

\bibitem[Dangovski et~al.(2021)Dangovski, Jing, Loh, Han, Srivastava, Cheung, Agrawal, and Solja{\v{c}}i{\'c}]{dangovski2021equivariant}
Dangovski, R., Jing, L., Loh, C., Han, S., Srivastava, A., Cheung, B., Agrawal, P., and Solja{\v{c}}i{\'c}, M.
\newblock Equivariant contrastive learning.
\newblock \emph{arXiv preprint arXiv:2111.00899}, 2021.

\bibitem[Ding et~al.(2023)Ding, Hu, Jiang, Li, Wang, and Yao]{ding2023random}
Ding, L., Hu, T., Jiang, J., Li, D., Wang, W., and Yao, Y.
\newblock Random smoothing regularization in kernel gradient descent learning.
\newblock \emph{arXiv preprint arXiv:2305.03531}, 2023.

\bibitem[Dong et~al.(2022)Dong, Muhammad, Zhou, Xie, Hu, Yang, Bae, and Li]{dong2022zood}
Dong, Q., Muhammad, A., Zhou, F., Xie, C., Hu, T., Yang, Y., Bae, S.-H., and Li, Z.
\newblock Zood: Exploiting model zoo for out-of-distribution generalization.
\newblock \emph{Advances in Neural Information Processing Systems Volume 35}, 2022.

\bibitem[Du et~al.(2022)Du, Liu, Li, and Zhao]{du2022vlmsurvey}
Du, Y., Liu, Z., Li, J., and Zhao, W.~X.
\newblock A survey of vision-language pre-trained models.
\newblock \emph{arXiv preprint arXiv:2202.10936}, 2022.

\bibitem[Esser et~al.(2021)Esser, Rombach, and Ommer]{esser2021taming}
Esser, P., Rombach, R., and Ommer, B.
\newblock Taming transformers for high-resolution image synthesis.
\newblock In \emph{Proceedings of the IEEE/CVF conference on computer vision and pattern recognition}, pp.\  12873--12883, 2021.

\bibitem[Feng et~al.(2021)Feng, Lin, Zhu, Zhao, Zhou, and Zha]{feng2021uncertainty}
Feng, R., Lin, Z., Zhu, J., Zhao, D., Zhou, J., and Zha, Z.-J.
\newblock Uncertainty principles of encoding gans.
\newblock In \emph{International Conference on Machine Learning}, pp.\  3240--3251. PMLR, 2021.

\bibitem[Feng et~al.(2022)Feng, He, Fu, Jampani, Akula, Narayana, Basu, Wang, and Wang]{feng2022strucdiffusion}
Feng, W., He, X., Fu, T.-J., Jampani, V., Akula, A., Narayana, P., Basu, S., Wang, X.~E., and Wang, W.~Y.
\newblock Training-free structured diffusion guidance for compositional text-to-image synthesis.
\newblock \emph{arXiv preprint arXiv:2212.05032}, 2022.

\bibitem[Ghadimi \& Lan(2013)Ghadimi and Lan]{ghadimi2013stochastic}
Ghadimi, S. and Lan, G.
\newblock Stochastic first-and zeroth-order methods for nonconvex stochastic programming.
\newblock \emph{SIAM Journal on Optimization}, 23\penalty0 (4):\penalty0 2341--2368, 2013.

\bibitem[Goodfellow et~al.(2014)Goodfellow, Shlens, and Szegedy]{goodfellow2014explaining}
Goodfellow, I.~J., Shlens, J., and Szegedy, C.
\newblock Explaining and harnessing adversarial examples.
\newblock \emph{arXiv preprint arXiv:1412.6572}, 2014.

\bibitem[HaoChen \& Ma(2022)HaoChen and Ma]{haochen2022theoretical}
HaoChen, J.~Z. and Ma, T.
\newblock A theoretical study of inductive biases in contrastive learning.
\newblock \emph{arXiv preprint arXiv:2211.14699}, 2022.

\bibitem[He et~al.(2020)He, Fan, Wu, Xie, and Girshick]{he2020moco}
He, K., Fan, H., Wu, Y., Xie, S., and Girshick, R.
\newblock Momentum contrast for unsupervised visual representation learning.
\newblock In \emph{Proceedings of the IEEE/CVF conference on computer vision and pattern recognition}, pp.\  9729--9738, 2020.

\bibitem[Hertz et~al.(2023)Hertz, Aberman, and Cohen-Or]{hertz2023delta}
Hertz, A., Aberman, K., and Cohen-Or, D.
\newblock Delta denoising score.
\newblock In \emph{Proceedings of the IEEE/CVF International Conference on Computer Vision}, pp.\  2328--2337, 2023.

\bibitem[Hessel et~al.(2021)Hessel, Holtzman, Forbes, Bras, and Choi]{hessel2021clipscore}
Hessel, J., Holtzman, A., Forbes, M., Bras, R.~L., and Choi, Y.
\newblock Clipscore: A reference-free evaluation metric for image captioning.
\newblock \emph{arXiv preprint arXiv:2104.08718}, 2021.

\bibitem[Heusel et~al.(2017)Heusel, Ramsauer, Unterthiner, Nessler, and Hochreiter]{heusel2017gans_fid}
Heusel, M., Ramsauer, H., Unterthiner, T., Nessler, B., and Hochreiter, S.
\newblock Gans trained by a two time-scale update rule converge to a local nash equilibrium.
\newblock \emph{Advances in neural information processing systems}, 30, 2017.

\bibitem[Ho \& Salimans(2022)Ho and Salimans]{ho2022classifier}
Ho, J. and Salimans, T.
\newblock Classifier-free diffusion guidance.
\newblock \emph{arXiv preprint arXiv:2207.12598}, 2022.

\bibitem[Ho et~al.(2020)Ho, Jain, and Abbeel]{ho2020denoising}
Ho, J., Jain, A., and Abbeel, P.
\newblock Denoising diffusion probabilistic models.
\newblock \emph{Advances in Neural Information Processing Systems}, 33:\penalty0 6840--6851, 2020.

\bibitem[Hu et~al.(2022)Hu, Zhili, Zhou, Wang, and Huang]{hu2022your}
Hu, T., Zhili, L., Zhou, F., Wang, W., and Huang, W.
\newblock Your contrastive learning is secretly doing stochastic neighbor embedding.
\newblock \emph{The Eleventh International Conference on Learning Representations}, 2022.

\bibitem[Hu et~al.(2023)Hu, Chen, Wang, Li, Wang, Sun, and Li]{hu2023complexity}
Hu, T., Chen, F., Wang, H., Li, J., Wang, W., Sun, J., and Li, Z.
\newblock Complexity matters: Rethinking the latent space for generative modeling.
\newblock \emph{arXiv preprint arXiv:2307.08283}, 2023.

\bibitem[Huang et~al.(2023)Huang, Sun, Xie, Li, and Liu]{huang2023t2i}
Huang, K., Sun, K., Xie, E., Li, Z., and Liu, X.
\newblock T2i-compbench: A comprehensive benchmark for open-world compositional text-to-image generation.
\newblock \emph{arXiv preprint arXiv:2307.06350}, 2023.

\bibitem[Jia et~al.(2021)Jia, Yang, Xia, Chen, Parekh, Pham, Le, Sung, Li, and Duerig]{jia2021ALIGN}
Jia, C., Yang, Y., Xia, Y., Chen, Y.-T., Parekh, Z., Pham, H., Le, Q., Sung, Y.-H., Li, Z., and Duerig, T.
\newblock Scaling up visual and vision-language representation learning with noisy text supervision.
\newblock In \emph{International conference on machine learning}, pp.\  4904--4916. PMLR, 2021.

\bibitem[Kalantidis et~al.(2020)Kalantidis, Sariyildiz, Pion, Weinzaepfel, and Larlus]{kalantidis2020hardnegforcl}
Kalantidis, Y., Sariyildiz, M.~B., Pion, N., Weinzaepfel, P., and Larlus, D.
\newblock Hard negative mixing for contrastive learning.
\newblock \emph{Advances in Neural Information Processing Systems}, 33:\penalty0 21798--21809, 2020.

\bibitem[Kim et~al.(2023{\natexlab{a}})Kim, Lee, Choi, Jeong, Sohn, and Shin]{kim2023collaborative}
Kim, S., Lee, K., Choi, J.~S., Jeong, J., Sohn, K., and Shin, J.
\newblock Collaborative score distillation for consistent visual synthesis.
\newblock \emph{arXiv preprint arXiv:2307.04787}, 2023{\natexlab{a}}.

\bibitem[Kim et~al.(2023{\natexlab{b}})Kim, Lee, Kim, Ha, and Zhu]{kim2023densediff}
Kim, Y., Lee, J., Kim, J.-H., Ha, J.-W., and Zhu, J.-Y.
\newblock Dense text-to-image generation with attention modulation.
\newblock In \emph{Proceedings of the IEEE/CVF International Conference on Computer Vision}, pp.\  7701--7711, 2023{\natexlab{b}}.

\bibitem[Kovalev \& Gasnikov(2022)Kovalev and Gasnikov]{kovalev2022first}
Kovalev, D. and Gasnikov, A.
\newblock The first optimal acceleration of high-order methods in smooth convex optimization.
\newblock \emph{Advances in Neural Information Processing Systems}, 35:\penalty0 35339--35351, 2022.

\bibitem[Lei et~al.(2019)Lei, Hu, Li, and Tang]{lei2019stochastic}
Lei, Y., Hu, T., Li, G., and Tang, K.
\newblock Stochastic gradient descent for nonconvex learning without bounded gradient assumptions.
\newblock \emph{IEEE transactions on neural networks and learning systems}, 31\penalty0 (10):\penalty0 4394--4400, 2019.

\bibitem[Li et~al.(2019)Li, Chen, Wang, and Carin]{li2019certified}
Li, B., Chen, C., Wang, W., and Carin, L.
\newblock Certified adversarial robustness with additive noise.
\newblock \emph{Advances in neural information processing systems}, 32, 2019.

\bibitem[Li et~al.(2021)Li, Selvaraju, Gotmare, Joty, Xiong, and Hoi]{li2021align}
Li, J., Selvaraju, R., Gotmare, A., Joty, S., Xiong, C., and Hoi, S. C.~H.
\newblock Align before fuse: Vision and language representation learning with momentum distillation.
\newblock \emph{Advances in neural information processing systems}, 34:\penalty0 9694--9705, 2021.

\bibitem[Li et~al.(2022)Li, Li, Xiong, and Hoi]{li2022blip}
Li, J., Li, D., Xiong, C., and Hoi, S.
\newblock Blip: Bootstrapping language-image pre-training for unified vision-language understanding and generation.
\newblock In \emph{International Conference on Machine Learning}, pp.\  12888--12900. PMLR, 2022.

\bibitem[Li et~al.(2023{\natexlab{a}})Li, Li, Savarese, and Hoi]{li2023blip2}
Li, J., Li, D., Savarese, S., and Hoi, S.
\newblock Blip-2: Bootstrapping language-image pre-training with frozen image encoders and large language models.
\newblock \emph{arXiv preprint arXiv:2301.12597}, 2023{\natexlab{a}}.

\bibitem[Li et~al.(2023{\natexlab{b}})Li, Liu, Wu, Mu, Yang, Gao, Li, and Lee]{li2023gligen}
Li, Y., Liu, H., Wu, Q., Mu, F., Yang, J., Gao, J., Li, C., and Lee, Y.~J.
\newblock Gligen: Open-set grounded text-to-image generation.
\newblock In \emph{Proceedings of the IEEE/CVF Conference on Computer Vision and Pattern Recognition}, pp.\  22511--22521, 2023{\natexlab{b}}.

\bibitem[Lian et~al.(2023)Lian, Li, Yala, and Darrell]{lian2023llmground}
Lian, L., Li, B., Yala, A., and Darrell, T.
\newblock Llm-grounded diffusion: Enhancing prompt understanding of text-to-image diffusion models with large language models.
\newblock \emph{arXiv preprint arXiv:2305.13655}, 2023.

\bibitem[Luo et~al.(2023)Luo, Hu, Zhang, Sun, Li, and Zhang]{luo2023diff}
Luo, W., Hu, T., Zhang, S., Sun, J., Li, Z., and Zhang, Z.
\newblock Diff-instruct: A universal approach for transferring knowledge from pre-trained diffusion models.
\newblock \emph{arXiv preprint arXiv:2305.18455}, 2023.

\bibitem[Ma et~al.(2023)Ma, Hu, Wang, and Sun]{ma2023elucidating}
Ma, J., Hu, T., Wang, W., and Sun, J.
\newblock Elucidating the design space of classifier-guided diffusion generation.
\newblock \emph{arXiv preprint arXiv:2310.11311}, 2023.

\bibitem[Mahendran \& Vedaldi(2015)Mahendran and Vedaldi]{mahendran2015imageinvert}
Mahendran, A. and Vedaldi, A.
\newblock Understanding deep image representations by inverting them.
\newblock In \emph{Proceedings of the IEEE conference on computer vision and pattern recognition}, pp.\  5188--5196, 2015.

\bibitem[Malladi et~al.(2024)Malladi, Gao, Nichani, Damian, Lee, Chen, and Arora]{malladi2024fine}
Malladi, S., Gao, T., Nichani, E., Damian, A., Lee, J.~D., Chen, D., and Arora, S.
\newblock Fine-tuning language models with just forward passes.
\newblock \emph{Advances in Neural Information Processing Systems}, 36, 2024.

\bibitem[Nesterov et~al.(2018)]{nesterov2018lectures}
Nesterov, Y. et~al.
\newblock \emph{Lectures on convex optimization}, volume 137.
\newblock Springer, 2018.

\bibitem[Nichol et~al.(2021)Nichol, Dhariwal, Ramesh, Shyam, Mishkin, McGrew, Sutskever, and Chen]{nichol2021glide}
Nichol, A., Dhariwal, P., Ramesh, A., Shyam, P., Mishkin, P., McGrew, B., Sutskever, I., and Chen, M.
\newblock Glide: Towards photorealistic image generation and editing with text-guided diffusion models.
\newblock \emph{arXiv preprint arXiv:2112.10741}, 2021.

\bibitem[OpenAI(2023)]{dalle2}
OpenAI.
\newblock Dalle-2, 2023.
\newblock URL \url{https://openai.com/dall-e-2}.

\bibitem[Patrick et~al.(2021)Patrick, Asano, Kuznetsova, Fong, Henriques, Zweig, and Vedaldi]{patrick2021comptransincl}
Patrick, M., Asano, Y.~M., Kuznetsova, P., Fong, R., Henriques, J.~F., Zweig, G., and Vedaldi, A.
\newblock On compositions of transformations in contrastive self-supervised learning.
\newblock In \emph{Proceedings of the IEEE/CVF International Conference on Computer Vision}, pp.\  9577--9587, 2021.

\bibitem[Phung et~al.(2023)Phung, Ge, and Huang]{phung2023att-refocus}
Phung, Q., Ge, S., and Huang, J.-B.
\newblock Grounded text-to-image synthesis with attention refocusing.
\newblock \emph{arXiv preprint arXiv:2306.05427}, 2023.

\bibitem[Podell et~al.(2023)Podell, English, Lacey, Blattmann, Dockhorn, M{\"u}ller, Penna, and Rombach]{podell2023sdxl}
Podell, D., English, Z., Lacey, K., Blattmann, A., Dockhorn, T., M{\"u}ller, J., Penna, J., and Rombach, R.
\newblock Sdxl: Improving latent diffusion models for high-resolution image synthesis.
\newblock \emph{arXiv preprint arXiv:2307.01952}, 2023.

\bibitem[Poole et~al.(2022)Poole, Jain, Barron, and Mildenhall]{poole2022dreamfusion}
Poole, B., Jain, A., Barron, J.~T., and Mildenhall, B.
\newblock Dreamfusion: Text-to-3d using 2d diffusion.
\newblock \emph{arXiv preprint arXiv:2209.14988}, 2022.

\bibitem[Radford et~al.(2021)Radford, Kim, Hallacy, Ramesh, Goh, Agarwal, Sastry, Askell, Mishkin, Clark, et~al.]{radford2021clip}
Radford, A., Kim, J.~W., Hallacy, C., Ramesh, A., Goh, G., Agarwal, S., Sastry, G., Askell, A., Mishkin, P., Clark, J., et~al.
\newblock Learning transferable visual models from natural language supervision.
\newblock In \emph{International conference on machine learning}, pp.\  8748--8763. PMLR, 2021.

\bibitem[Rombach et~al.(2022)Rombach, Blattmann, Lorenz, Esser, and Ommer]{rombach2022high}
Rombach, R., Blattmann, A., Lorenz, D., Esser, P., and Ommer, B.
\newblock High-resolution image synthesis with latent diffusion models.
\newblock In \emph{Proceedings of the IEEE/CVF Conference on Computer Vision and Pattern Recognition}, pp.\  10684--10695, 2022.

\bibitem[Salman et~al.(2019)Salman, Li, Razenshteyn, Zhang, Zhang, Bubeck, and Yang]{salman2019provably}
Salman, H., Li, J., Razenshteyn, I., Zhang, P., Zhang, H., Bubeck, S., and Yang, G.
\newblock Provably robust deep learning via adversarially trained smoothed classifiers.
\newblock \emph{Advances in Neural Information Processing Systems}, 32, 2019.

\bibitem[Shu et~al.(2021)Shu, Kou, Cao, Wang, and Long]{shu2021zoo}
Shu, Y., Kou, Z., Cao, Z., Wang, J., and Long, M.
\newblock Zoo-tuning: Adaptive transfer from a zoo of models.
\newblock In \emph{International Conference on Machine Learning}, pp.\  9626--9637. PMLR, 2021.

\bibitem[Sohl-Dickstein et~al.(2015)Sohl-Dickstein, Weiss, Maheswaranathan, and Ganguli]{sohl2015deep}
Sohl-Dickstein, J., Weiss, E., Maheswaranathan, N., and Ganguli, S.
\newblock Deep unsupervised learning using nonequilibrium thermodynamics.
\newblock In \emph{International Conference on Machine Learning}, pp.\  2256--2265. PMLR, 2015.

\bibitem[Song et~al.(2020)Song, Sohl-Dickstein, Kingma, Kumar, Ermon, and Poole]{song2020score}
Song, Y., Sohl-Dickstein, J., Kingma, D.~P., Kumar, A., Ermon, S., and Poole, B.
\newblock Score-based generative modeling through stochastic differential equations.
\newblock \emph{arXiv preprint arXiv:2011.13456}, 2020.

\bibitem[Tarvainen \& Valpola(2017)Tarvainen and Valpola]{tarvainen2017mean}
Tarvainen, A. and Valpola, H.
\newblock Mean teachers are better role models: Weight-averaged consistency targets improve semi-supervised deep learning results.
\newblock \emph{Advances in neural information processing systems}, 30, 2017.

\bibitem[Tian et~al.(2020)Tian, Sun, Poole, Krishnan, Schmid, and Isola]{tian2020goodviewforcl}
Tian, Y., Sun, C., Poole, B., Krishnan, D., Schmid, C., and Isola, P.
\newblock What makes for good views for contrastive learning?
\newblock \emph{Advances in neural information processing systems}, 33:\penalty0 6827--6839, 2020.

\bibitem[Wang \& Torr(2022)Wang and Torr]{wang2022traditionalCaG}
Wang, G. and Torr, P.~H.
\newblock Traditional classification neural networks are good generators: They are competitive with ddpms and gans.
\newblock \emph{arXiv preprint arXiv:2211.14794}, 2022.

\bibitem[Wang et~al.(2023{\natexlab{a}})Wang, Fan, Xu, Wang, Mohan, Iandola, Ranjan, Li, Liu, Wang, et~al.]{wang2023steindreamer}
Wang, P., Fan, Z., Xu, D., Wang, D., Mohan, S., Iandola, F., Ranjan, R., Li, Y., Liu, Q., Wang, Z., et~al.
\newblock Steindreamer: Variance reduction for text-to-3d score distillation via stein identity.
\newblock \emph{arXiv preprint arXiv:2401.00604}, 2023{\natexlab{a}}.

\bibitem[Wang et~al.(2023{\natexlab{b}})Wang, Chen, Chen, Ma, Lu, and Lin]{wang2023compositional-attcontrol}
Wang, R., Chen, Z., Chen, C., Ma, J., Lu, H., and Lin, X.
\newblock Compositional text-to-image synthesis with attention map control of diffusion models.
\newblock \emph{arXiv preprint arXiv:2305.13921}, 2023{\natexlab{b}}.

\bibitem[Wang et~al.(2023{\natexlab{c}})Wang, Lu, Wang, Bao, Li, Su, and Zhu]{wang2023prolificdreamer}
Wang, Z., Lu, C., Wang, Y., Bao, F., Li, C., Su, H., and Zhu, J.
\newblock Prolificdreamer: High-fidelity and diverse text-to-3d generation with variational score distillation.
\newblock \emph{arXiv preprint arXiv:2305.16213}, 2023{\natexlab{c}}.

\bibitem[Yin et~al.(2020)Yin, Molchanov, Alvarez, Li, Mallya, Hoiem, Jha, and Kautz]{yin2020deepinversion}
Yin, H., Molchanov, P., Alvarez, J.~M., Li, Z., Mallya, A., Hoiem, D., Jha, N.~K., and Kautz, J.
\newblock Dreaming to distill: Data-free knowledge transfer via deepinversion.
\newblock In \emph{Proceedings of the IEEE/CVF Conference on Computer Vision and Pattern Recognition}, pp.\  8715--8724, 2020.

\end{thebibliography}
\bibliographystyle{icml2024}

\newpage
\appendix
\onecolumn
\section{Appendix}
\subsection{Importance of discriminative component in conditional generation}
\label{apdix:discri_gener}
We present both quantitative and qualitative results, showing that the discriminative module plays a more central role in aligning with the condition for current text-to-image DPMs.
\textbf{Table} \ref{tab:sd_cfg_vqa} shows the result of the BLIP-VQA score on the \textit{Attribute-binding }sub-dataset of T2I-CompBench of Stable Diffusion v1.5 with different classifier-free-guidance (CFG) scales; clearly, a better alignment of text and image can be achieved by a larger CFG scale.
We further present the images generated by the traditional generation process of Stable Diffusion v1.5 with CFG coefficient 7.5, 30, and the Score Distillation Sampling (SDS) process separately. The results indicate that when strong conditions come along, a smaller CFG also performs poorly, and the higher CFG results in comparative image quality but higher consistency with text.

\begin{table}[H]
\centering
\begin{small}
\caption{The BLIP-VQA score of Stable Diffusion v1.5 with CFG 7.5 and 30 on \textit{Attribute-binding} dataset}
\vspace{0.1in}
\label{tab:sd_cfg_vqa}
\begin{tabular}{l|ccc}
\toprule
 & \textbf{color}  & \textbf{shape}  & \textbf{texture} \\ \midrule
cfg=7.5 & 0.3632 & 0.3490 & 0.3985  \\ 
cfg=30  & 0.4012 & 0.4018 & 0.4386  \\ \bottomrule
\end{tabular} 
\end{small}
\end{table}
\vspace{0.1in}

\begin{figure}[H]
    \centering
    \includegraphics[width=0.95\textwidth]{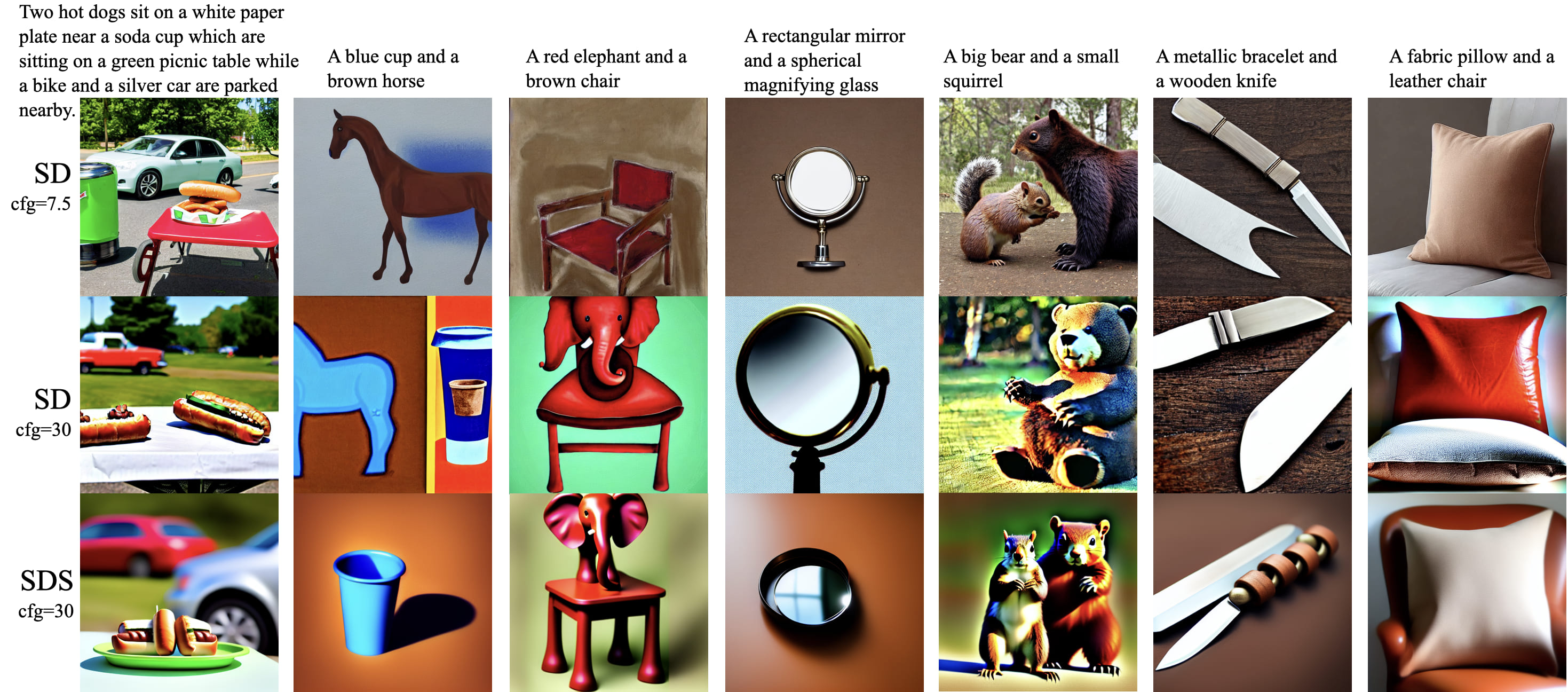}
    \vskip -3mm
    \caption{A comparison of the images generated from the Stable Diffusion (SD) reverse process with CFG scale=7.5, 30 and the Score Distillation Sampling (SDS) process with CFG scale=30.}
    \label{fig:sd_cfg}
\end{figure}
\vskip -5mm

\subsection{Detailed BLIP-2 losses}
We directly utilize the training losses of the BLIP-2 model \citep{li2021align, li2022blip, li2023blip2}, except the image-text contrastive (ITC) loss, because it requires a large batch size that is not suitable for text-to-image generation tasks. The image-text matching loss ($\mathcal{L}_{itm})$ focuses on learning how images and texts align closely. It's a simple binary cross-entropy loss: the model predicts if an image and text pair match or not, using a linear layer called the ITM head based on their combined features. The caption-generation loss ($\mathcal{L}_{cg}$) aims to generate textual descriptions given an image. It optimizes a cross-entropy loss which trains the BERT module to maximize the likelihood of the text in an autoregressive manner. A label smoothing of 0.1 is adopted when computing the loss.

Specifically,
\begin{equation}
    \mathcal{L}_{itm}=\mathbb{E}_{(I, T)} H(\mathbf{y}^{itm}, \mathbf{p}^{itm}(I,T))),
\end{equation}
where $H$ denotes the cross-entropy function, $\mathbf{y}^{itm}$ is a 2-dimensional one-hot vector representing the ground-truth label of whether $I$ and $T$ is a match, $\mathbf{p}^{itm}$ is the output of the ITM head.

\begin{equation}
    \mathcal{L}_{cg}=\mathbb{E}_{(I, \hat{T})} H (\mathbf{y}^{mask}, \mathbf{p}^{mask}(I, \hat{T})),
\end{equation}
where $\hat{T}$ denotes the text with mask, and $\mathbf{p}^{mask}(I, \hat{T})$ denotes the BERT module's output for a masked token, and $\mathbf{y}^{mask}$ is a one-hot vocabulary distribution in which the value of the ground-truth token is set to 1.

\subsection{Exploration on augmentation methods for VLM inversion}
\label{appdx:aug_explore}
Here we show the impact of adding \textit{random resized crop} and \textit{random horizontal flip} augments, and straightforwardly illustrate the potential reasons through BLIP-2 loss that measures the semantic consistency.

\begin{figure}[H]
    \centering
    \includegraphics[width=0.95\textwidth]{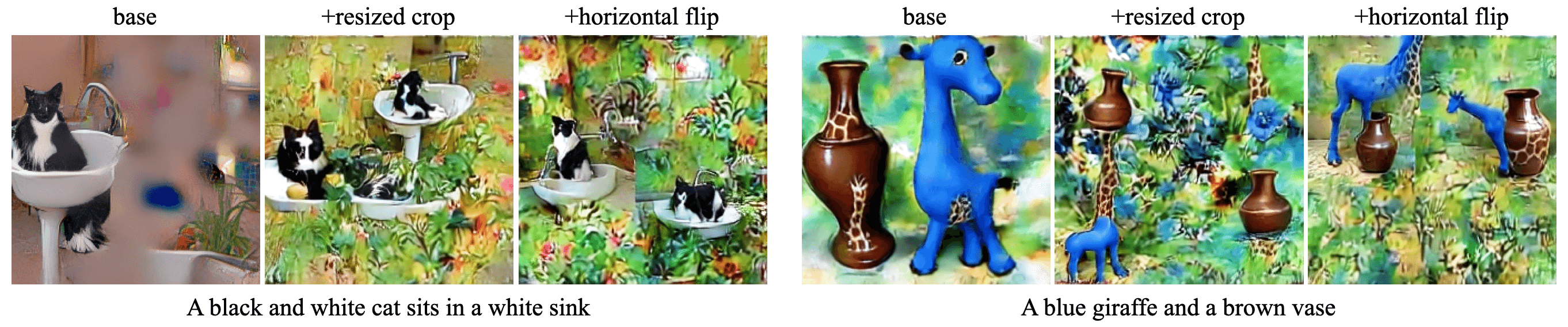}
    \vskip -3mm
    \caption{Images with our base augmentation methods (random affine, random perspective, color-jitter, random erasing, and Gaussian noise), base + random resized crop, and base + random horizontal flip.}
    \label{fig:aug_abla}
\end{figure}
\vskip -5mm

\begin{figure}[H]
    \centering
    \includegraphics[width=0.95\textwidth]{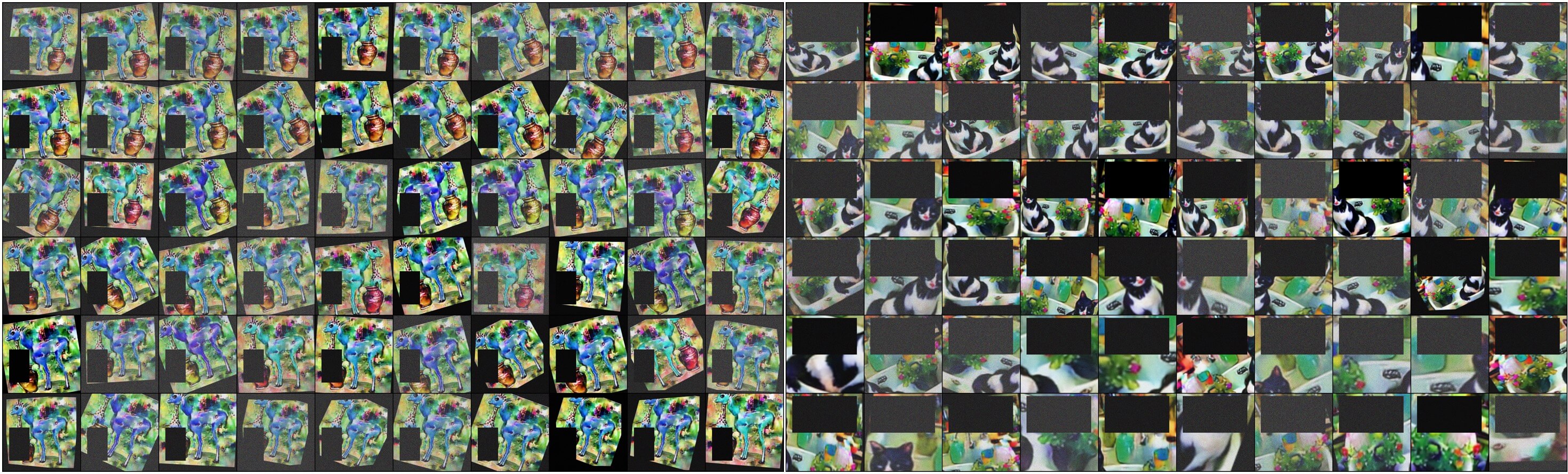}
    \vskip -3mm
    \caption{Sorted augmented images sorted in ascending order of BLIP-2 loss $L(\xb,\yb)$. In the left image, the horizontally flipped images exhibit higher loss. Since all of our captions contain multiple objects, incorporating \textit{horizontal flip} together with \textit{random erasing} easily leads to semantic inconsistencies within the same equivalence class. The right image shows that images augmented through resized cropping have a larger loss, which clearly shows altered semantic information.}
    \label{fig:aug_sort}
\end{figure}
\vskip -5mm

\subsection{More images of our method}
\label{appdix:moreimages}
We show more images generated by our method with the prompts sampled from the attribute-biding, spatial $\&$ non-spatial, and complex sub-datasets of T2I-CompBench, indicating our method's effectiveness in generating images with different types of prompts.

\begin{figure}[H]
    \centering
    \includegraphics[width=0.95\textwidth]{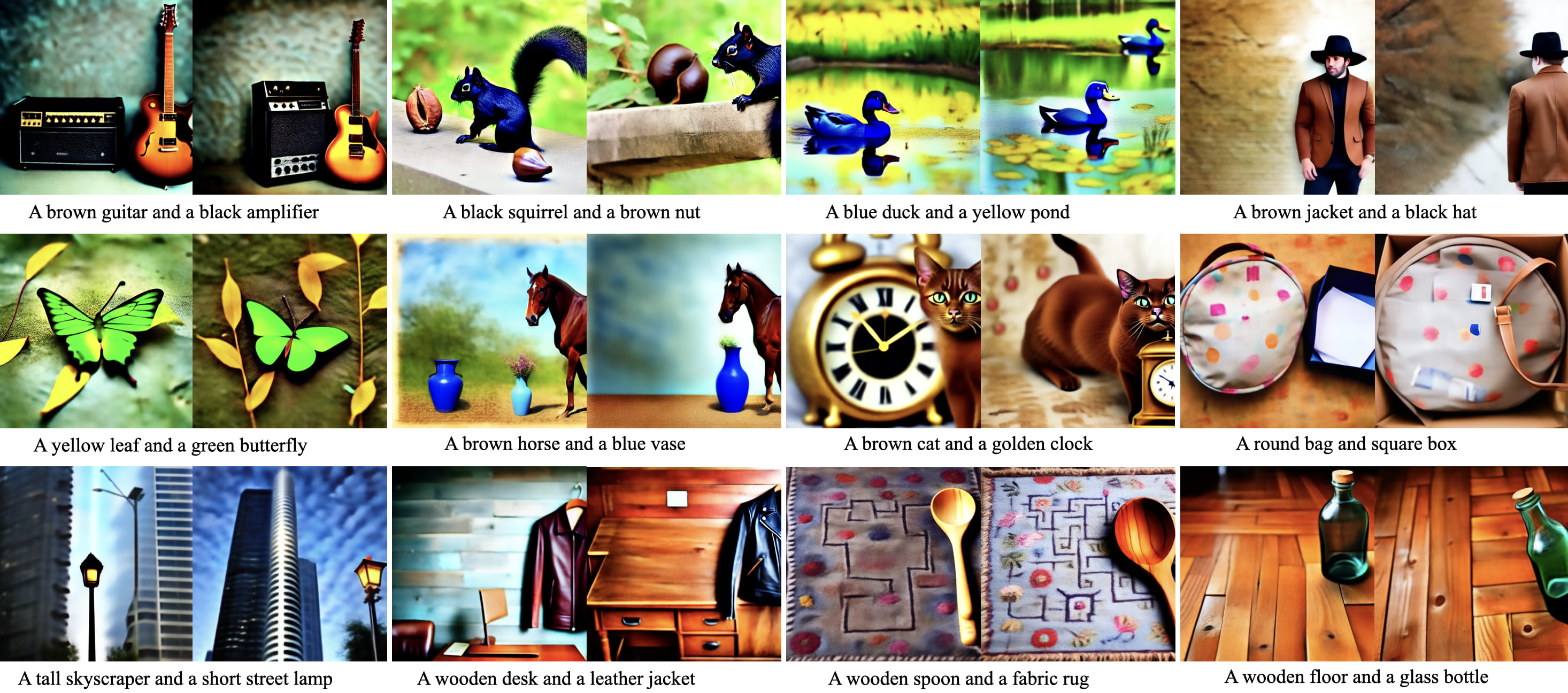}
    \vskip -3mm
    \caption{Images with prompt from \textit{Attribute-binding} sub-dataset}
    \label{fig:cherry_att}
\end{figure}
\vskip -5mm

\begin{figure}[H]
    \centering
    \includegraphics[width=0.95\textwidth]{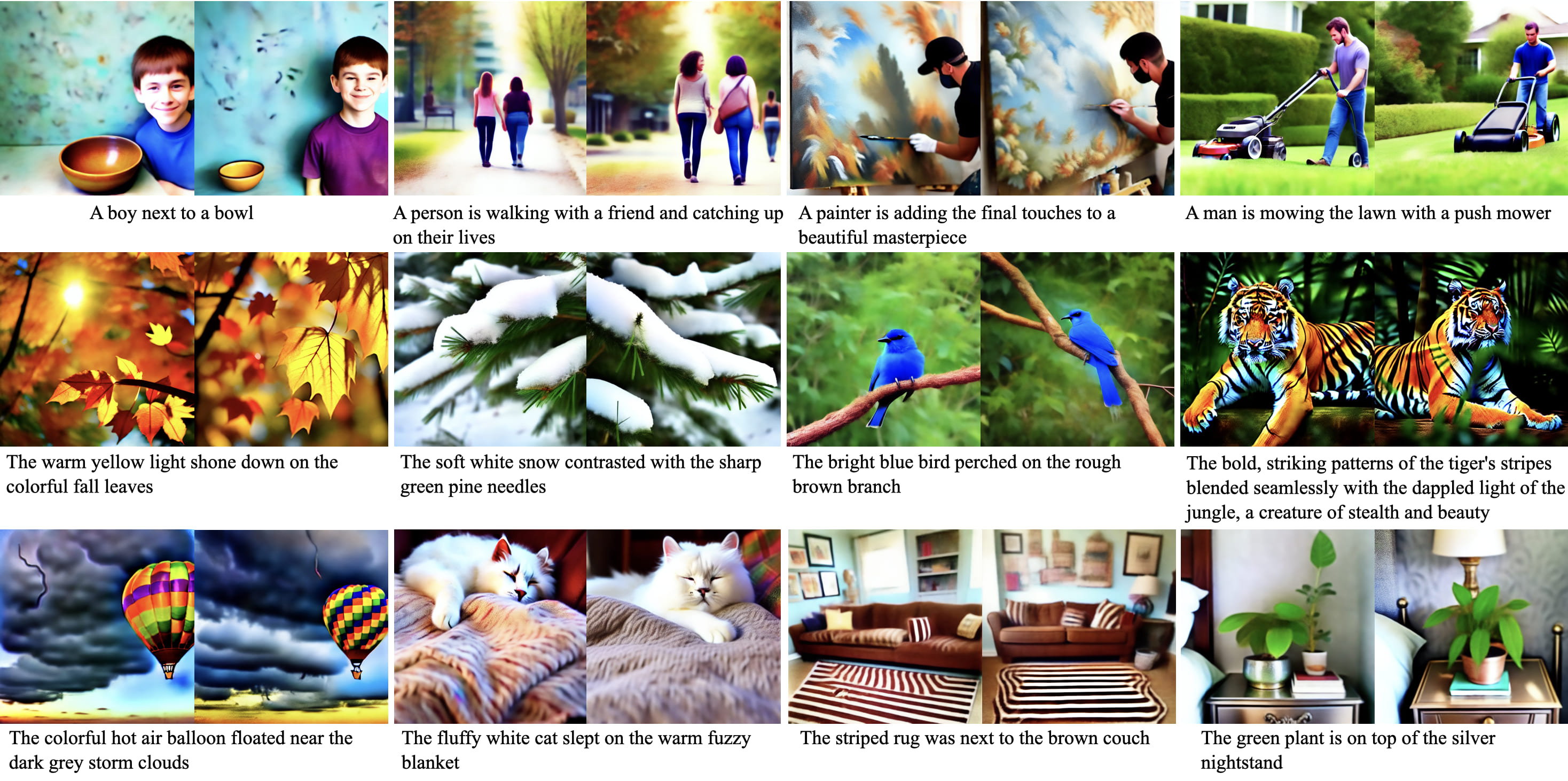}
    \vskip -3mm
    \caption{Images with prompt from \textit{Spatial}, \textit{Non-spatial}, and \textit{Complex} sub-dataset}
    \label{fig:cherry_com}
\end{figure}

\subsection{Role of BLIP-2 inversion and SDS during Optimization}
\label{appdix:blipSDSrole}

\paragraph{Evolution of individual BLIP-2 inversion, SDS, and our method (BLIP-2+SDS)}
To better illustrate the effect of the two components of our method, we present the image evolution during the optimization of the two components work separately and together (\textbf{Figure} \ref{fig:evo1} and \ref{fig:evo2}). Apparently, BLIP-2 inversion can strictly follow the prompt, while SDS tends to partially follow the prompt and neglect some objects contained in the prompt. Besides, for SDS optimization, images can change a lot during evolution, especially when encountering multiple objects in the prompt (see \textbf{Figure} \ref{fig:evo1}), showing that introducing the instruction by cross attention is kind of fragile, the information contained in the prompt is always incompletely captured and unstable. Things are totally different in BLIP-2 inversion; the instruction is well followed, while the perceptual image quality is kind of weak. When these two components work together, better performance is achieved.

\begin{figure}[H]
    \centering
    \includegraphics[width=0.95\textwidth]{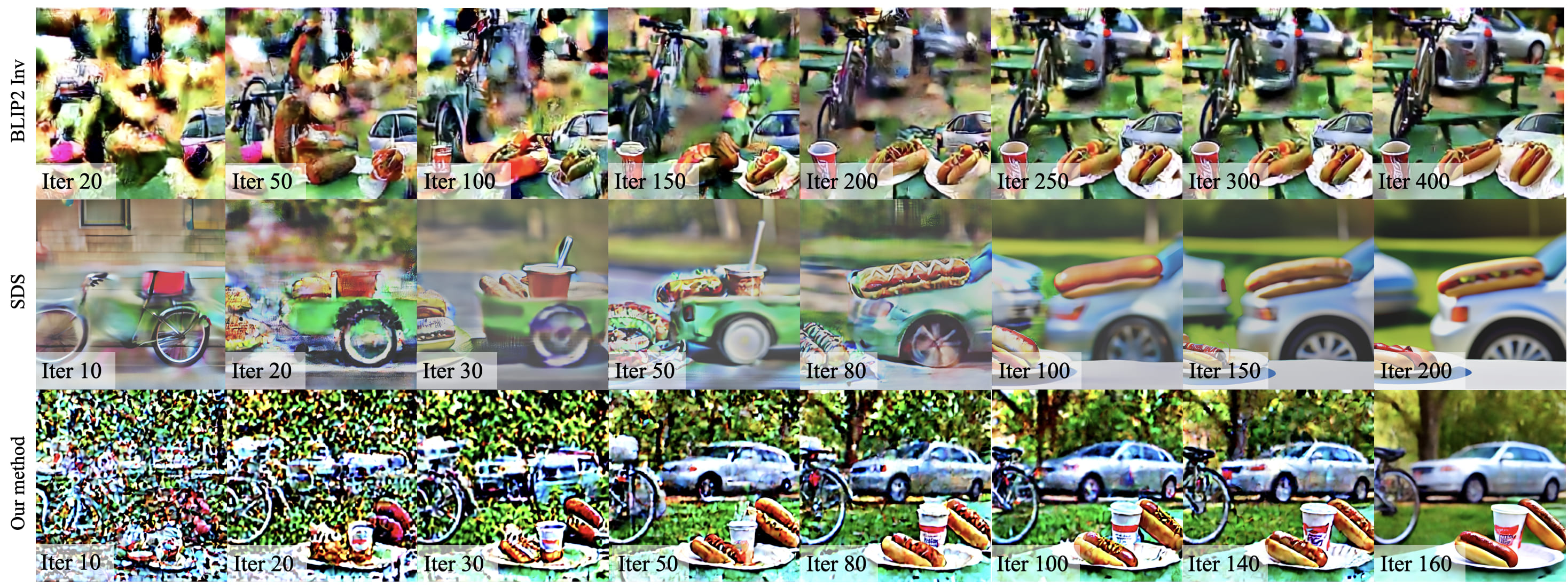}
    \vskip -3mm
    \caption{Image evolution of prompt ``Two hot dogs sit on a white paper plate near a soda cup which are sitting on a green picnic table while a bike and a silver car are parked nearby".}
    \label{fig:evo1}
\end{figure}
\vskip -5mm

\begin{figure}[H]
    \centering
    \includegraphics[width=0.95\textwidth]{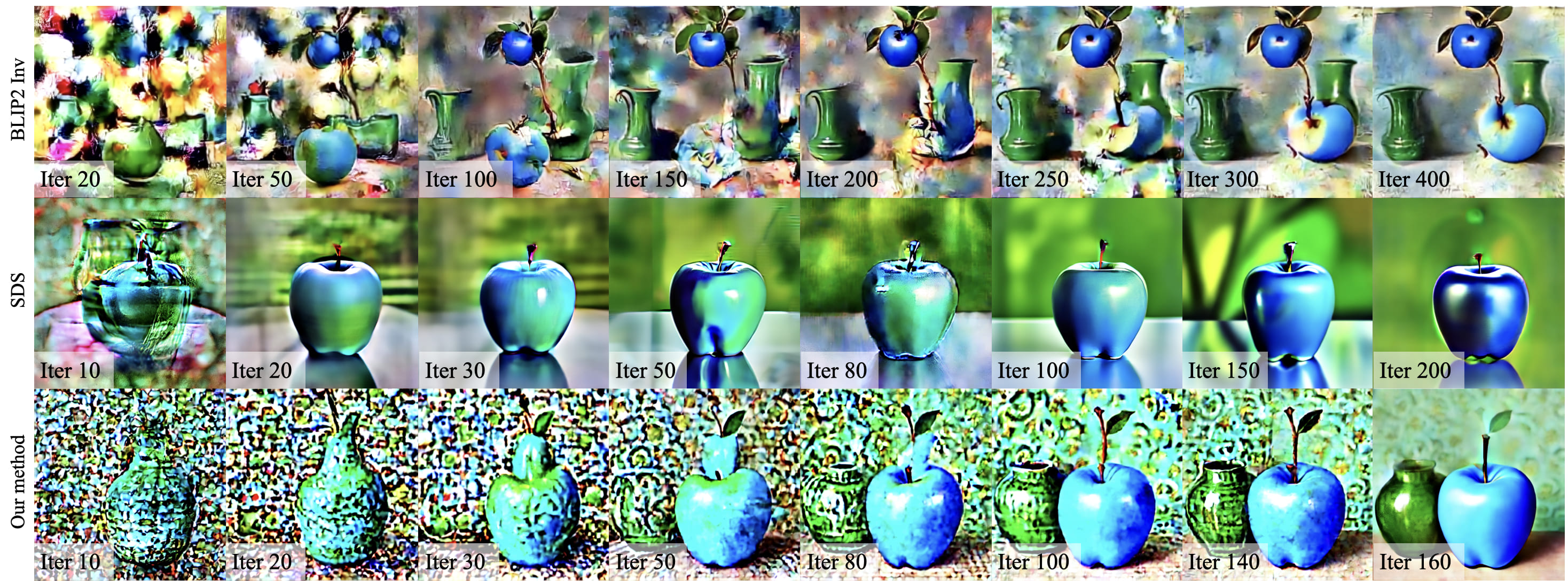}
    \vskip -3mm
    \caption{Images evolution of prompt ``A blue apple and a green vase".}
    \label{fig:evo2}
\end{figure}

\paragraph{Visualization of the gradient of BLIP-2 and SDS}
To gain a more intuitive understanding of the distinct roles of the two components in our method, we visualized the gradients provided by each module (see \textbf{Figure} \ref{fig:grad}). Our approach to visualization is quite straightforward. We directly selected the first three (out of four) channels and then presented them in the form of images. It is evident that the gradient information from BLIP-2 is primarily concentrated on the objects, especially those with incorrectly generated attributes or those that are omitted. On the other hand, the gradients provided by SDS are more comprehensive, clearly outlining the contours of objects while also optimizing the background. Although our visualization method is direct and somewhat rudimentary, it effectively highlights the distinct functions of each module. For example, with the prompt ``a blue apple and a green vase", the SDS gradient reveals the outlines of two apples, while the BLIP-2 gradient focuses on one of the apples with the wrong attribute, attempting to transform it into a vase that matches the text description (see \textbf{Figure} \ref{fig:grad}). 


\begin{figure}[H]
\begin{center}
    \subfigure[A blue apple and a green vase]{
    \begin{minipage}[t]{0.45\linewidth}
            \centering
            \includegraphics[width=1\linewidth]{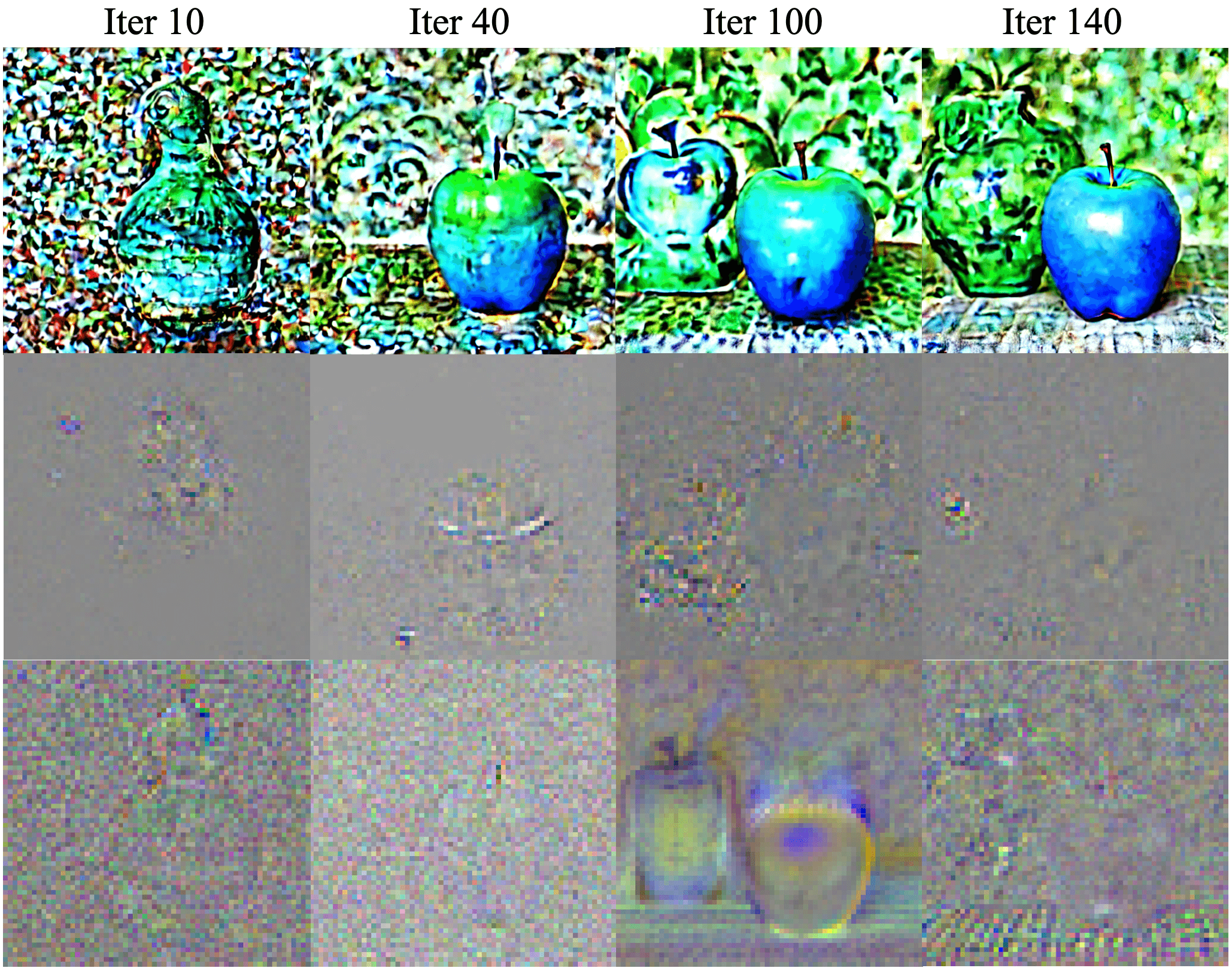}
        \end{minipage}   
    }
    \subfigure[A blue backpack and a red book]{
    \begin{minipage}[t]{0.45\linewidth}
            \centering
            \includegraphics[width=1\linewidth]{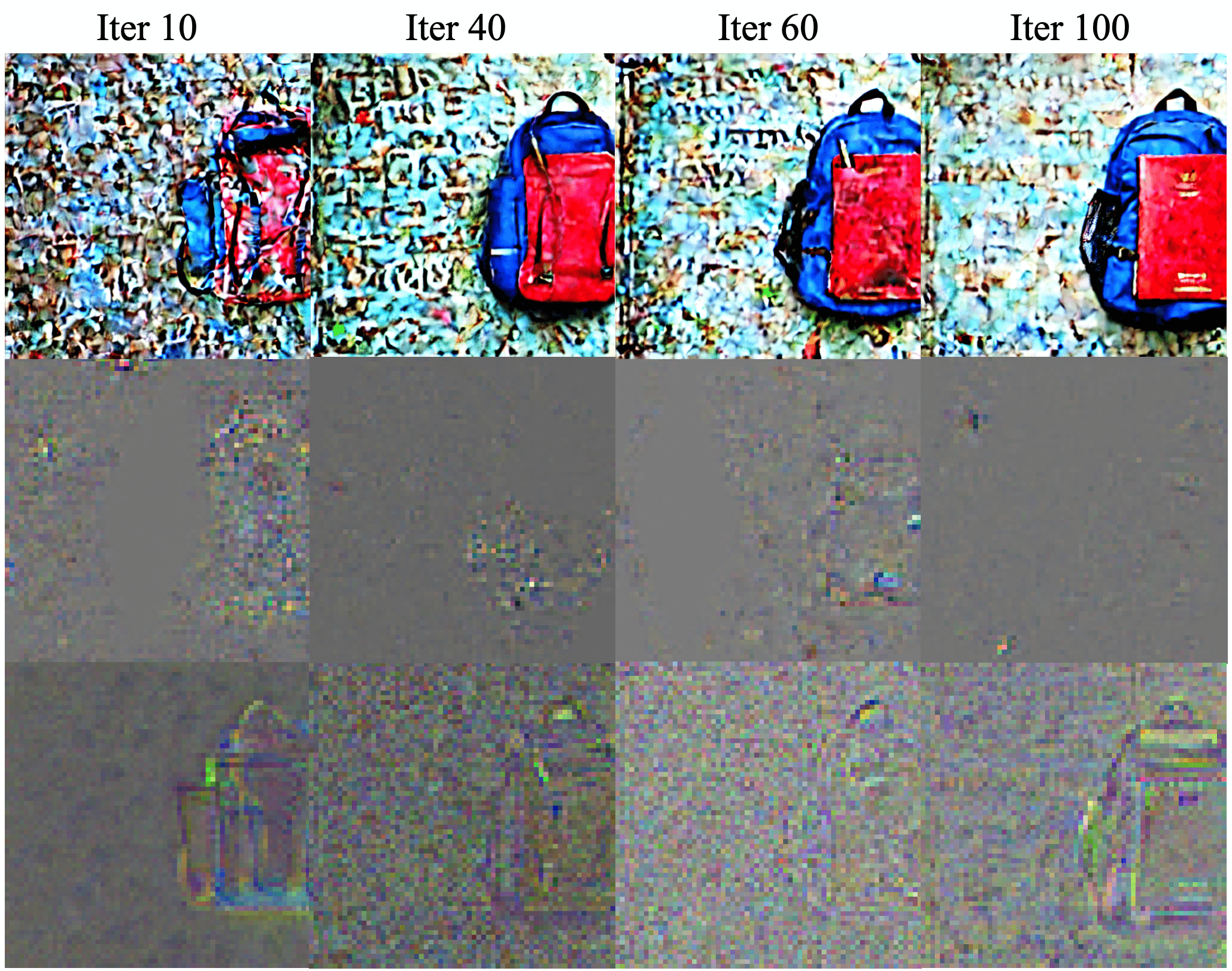}
        \end{minipage}
    }
\end{center}
\vskip -3mm
\caption{Visualization of the Gradient of BLIP-2 and SDS. The images, the gradient of BLIP-2, and the gradient of SDS visualization are separately shown from top to bottom row.}
\label{fig:grad}
\end{figure}

\subsection{Limitations of our method}
Although our method is simple and effective, it still has certain shortcomings (see \textbf{Figure} \ref{fig:fail}). First, our method struggles to generate images with accurate positional information. We attribute this to BLIP-2's insensitivity to capturing location details. Individual BLIP-2 inversion, despite achieving highly accurate attribute matching, performs poorly in the \textit{Spatial} subset (see BLIP-2 Inv result in \textbf{Table} \ref{tab:overallresult}). Additionally, our method often generates images that directly incorporate the prompt. This may be due to the direct involvement of a BERT text encoder in the pre-trained BLIP-2 contained in our current approach. These issues are also encountered in DALLE3. Lastly, as our method does not directly utilize a specialized image generation model, it sometimes produces images that are not entirely realistic. For instance, to achieve correct attribute matching, our method may generate images where multiple objects are merged into one, which seems quite unrealistic.

\begin{figure}[H]
    \centering
    \includegraphics[width=0.95\textwidth]{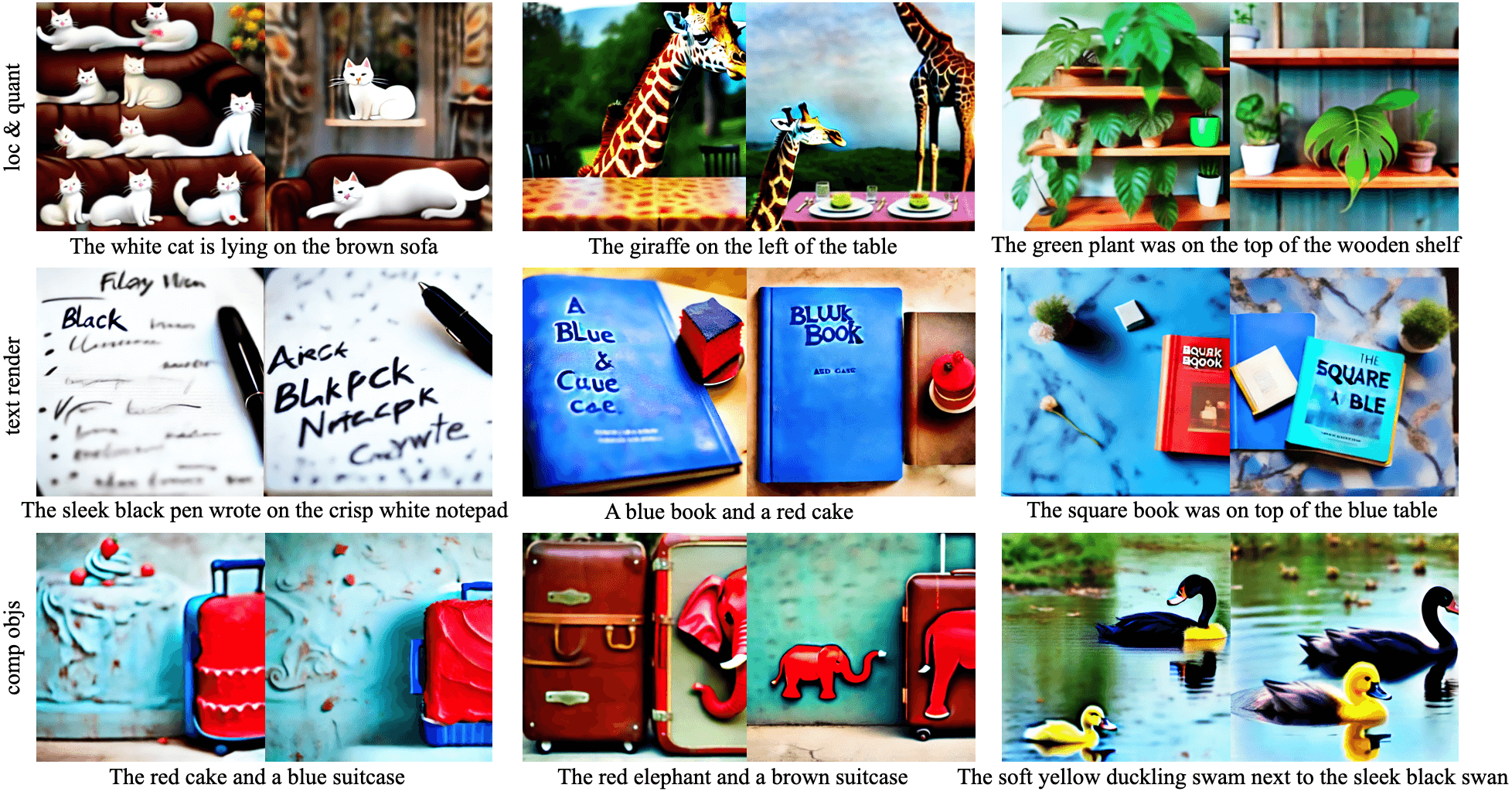}
    \vskip -3mm
    \caption{Limitations of our method.}
    \label{fig:fail}
\end{figure}

\section{Technical Details}
\label{sec:app_tech}

We introduce two propositions, both of which state that the smoothed version of $f$ by taking the expectation of the noise $\mathbb{E}_{\epsilon\sim P} f(x+\epsilon)$ has nicer property. Proposition \ref{prop1} states that the smoothed function has a higher smoothness (roughly speaking, it can take a higher degree of derivatives), and Proposition \ref{prop2} states that the smoothed function has a smaller condition number, which may lead to faster convergence rate when applying gradient descent for minimization.

Before introducing these two propositions, we state some necessary terminologies. We say a function $f$ has smoothness $m$ if $f$ is in the Sobolev space $H^m(\mathbb{R}^d)$. The Fourier transform of $f\in L_1(\mathbb{R}^d)$ is given by $$\mathcal{F}(f)(\omega)=(2\pi)^{-d/2}\int_{\mathbb{R}^d} f(x) e^{-i x^T\omega}d x.$$ The characteristic function of a random variable $\epsilon$ following distribution $P$ is defined by $\varphi(t) = \mathbb{E}_{\epsilon\sim P} e^{i\epsilon^\top t}$. Let $\partial^2 f(x)$ be the Hessian matrix on $x\in \mathbb{R}^d$. For two positive matrices $A,B$, we write $A\preceq B$ (or $B\succeq A$) if $B-A$ is semi-positive definite.

\begin{proposition}[Improved smoothness]\label{prop1}
    Let $f\in H^m(\mathbb{R}^d)$. Assume that the characteristic function of the noise $\epsilon$ with mean zero satisfies
    \begin{align*}
        \varphi(t) \leq c_1(1+\|\omega\|^2)^{-m_1/2}, \forall \omega \in \mathbb{R}^d.
    \end{align*}
Then $g(x)\coloneqq\mathbb{E}_\epsilon f(x+\epsilon)$ has smoothness at least $m+m_1$, i.e., $g\in H^{m+m_1}(\mathbb{R}^d)$. Furthermore, if $\epsilon$ is Gaussian, then $\mathbb{E}_\epsilon f(x+\epsilon)$ has infinite smoothness, i.e., for any $m_2>0$, $g \in H^{m+m_2}(\mathbb{R}^d)$. Moreover, $\tilde{L}_y$  is $\sqrt{\frac{2 R^2}{\pi \sigma^2}}$-Lipschitz when $\epsilon \sim \mathcal{N}(0, \sigma^2 I)$ and $L_y: \RR^d \mapsto \RR$ with $R\ge \sup_{x \in \RR^d} |L_y(x)|$.
\end{proposition}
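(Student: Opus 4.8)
\textbf{Proof plan for Proposition \ref{prop1}.}

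The plan is to work entirely on the Fourier side, where both smoothing and Sobolev regularity have clean characterizations. First I would recall that $g(x) = \EE_\epsilon f(x+\epsilon) = (f * \check{p})(x)$ where $p$ is the density of $-\epsilon$ (or, handling the convolution direction carefully, of $\epsilon$); the key point is that $\mathcal{F}(g)(\omega) = \mathcal{F}(f)(\omega)\cdot \overline{\varphi(\omega)}$ up to the normalization constant, since the Fourier transform of a probability density is precisely its characteristic function. The hypothesis $|\varphi(t)| \le c_1(1+\|t\|^2)^{-m_1/2}$ then gives pointwise control of the multiplier. Membership $g \in H^{m+m_1}(\RR^d)$ is the statement that $\int (1+\|\omega\|^2)^{m+m_1}|\mathcal{F}(g)(\omega)|^2 d\omega < \infty$, so I would simply write
\begin{equation*}
\int (1+\|\omega\|^2)^{m+m_1}|\mathcal{F}(f)(\omega)|^2|\varphi(\omega)|^2\,d\omega \le c_1^2 \int (1+\|\omega\|^2)^{m+m_1}(1+\|\omega\|^2)^{-m_1}|\mathcal{F}(f)(\omega)|^2\,d\omega = c_1^2\|f\|_{H^m}^2 < \infty.
\end{equation*}
This is the whole argument for the first claim; the matching of the exponents $m+m_1-m_1 = m$ is what makes it work.

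For the Gaussian case, the characteristic function is $\varphi(t) = e^{-\sigma^2\|t\|^2/2}$, which decays faster than any polynomial: for every $m_2>0$ there is a constant $c(m_2,\sigma)$ with $e^{-\sigma^2\|t\|^2/2} \le c(m_2,\sigma)(1+\|t\|^2)^{-m_2/2}$, so the previous step applies with $m_1$ replaced by an arbitrary $m_2$, giving $g \in H^{m+m_2}$ for all $m_2$, i.e. infinite smoothness. For the Lipschitz bound I would instead argue directly rather than through Fourier analysis: $g(x) = \int L_y(x - u) \rho_\sigma(u)\,du$ where $\rho_\sigma$ is the $\mathcal{N}(0,\sigma^2 I)$ density, so $\nabla g(x) = \int L_y(x-u)\nabla\rho_\sigma(u)\,du$ (differentiation under the integral is justified since $L_y$ is bounded and $\nabla\rho_\sigma \in L_1$). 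Then $\|\nabla g(x)\| \le R\int \|\nabla \rho_\sigma(u)\|\,du$, and $\int \|\nabla\rho_\sigma(u)\|\,du$ is a standard Gaussian moment computation: using $\nabla\rho_\sigma(u) = -(u/\sigma^2)\rho_\sigma(u)$ it reduces to $\sigma^{-2}\EE\|Z\|$ for $Z \sim \mathcal{N}(0,\sigma^2 I)$; for the scalar ($d=1$) normalization this gives $\sqrt{2/(\pi\sigma^2)}$, matching the stated constant $\sqrt{2R^2/(\pi\sigma^2)}$ once the factor $R$ is carried through. Bounding $\|\nabla g\|_\infty$ bounds the Lipschitz constant of $g = \tilde L_y$.

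The main obstacle is technical hygiene rather than conceptual depth: one must be careful that $f \in H^m$ need not be in $L_1$, so the convolution identity $\mathcal{F}(f*p) = \mathcal{F}(f)\mathcal{F}(p)$ should be invoked in the tempered-distribution / $L_2$ sense (Plancherel), where it remains valid since $p$ is a bounded-characteristic-function probability measure and convolution with a probability measure is a bounded operator on $H^m$. I would also note the hypothesis as stated writes $\varphi(t) \le \cdots$ with the free variable named $\omega$ and without an absolute value; in the proof I would silently read this as $|\varphi(\omega)| \le c_1(1+\|\omega\|^2)^{-m_1/2}$, which is the only sensible interpretation and is what the Gaussian example satisfies. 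None of these points is hard, but they are where a referee would look for gaps, so the writeup should state the functional-analytic setting explicitly before doing the one-line Fourier estimate.
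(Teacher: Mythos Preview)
Your Fourier-side argument for the Sobolev gain is exactly the paper's proof: write $\mathcal{F}(g)=\mathcal{F}(f)\,\varphi$ (up to conjugation), then absorb $|\varphi(\omega)|^2\le c_1^2(1+\|\omega\|^2)^{-m_1}$ into the Sobolev weight to reduce to $\|f\|_{H^m}^2$. The Gaussian case is handled identically. Your side remarks about interpreting the convolution identity in the $L_2$/tempered-distribution sense and about reading the hypothesis as a bound on $|\varphi|$ are apt; the paper's writeup glosses over both.

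For the Lipschitz claim your route is close to the paper's but loses the stated constant when $d>1$. You bound
\[
\|\nabla g(x)\|\le R\int_{\RR^d}\|\nabla\rho_\sigma(u)\|\,du
= R\,\sigma^{-2}\,\EE\|Z\|,\qquad Z\sim\mathcal{N}(0,\sigma^2 I),
\]
which equals $R\sigma^{-1}\EE_{Z_0\sim\mathcal{N}(0,I)}\|Z_0\|$; this matches $\sqrt{2R^2/(\pi\sigma^2)}$ only for $d=1$ and grows like $\sqrt{d}$ otherwise, as you implicitly acknowledge. The paper (following Salman et al.) instead fixes a unit vector $v$ and bounds the directional derivative $|\nabla g(x)^\top v|$ \emph{before} pulling the absolute value inside the integral. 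This replaces $\|t-x\|$ by the scalar projection $|(t-x)^\top v|$, whose law under the isotropic Gaussian is one-dimensional $\mathcal{N}(0,\sigma^2)$ regardless of $d$; the $1$D mean absolute value $\sigma\sqrt{2/\pi}$ then gives the dimension-free constant, and taking the supremum over $v$ recovers $\|\nabla g(x)\|$. Your plan therefore proves \emph{a} Lipschitz bound, but to obtain the exact constant in the proposition for general $d$ you need this project-then-bound step.
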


\begin{proof}
    Since $f\in H^m(\mathbb{R}^d)$, it can be shown that \cite{adams2003sobolev}
    \begin{align*}
        \int_{\mathbb{R}^d} |\mathcal{F}(f)(\omega)|^2(1+\|\omega\|_2^2)^{m}{\rm d}\omega < \infty.
    \end{align*}
    The Fourier inversion theorem implies that
    \begin{align*}
        g(x) = \mathbb{E}_\epsilon f(x+\epsilon) = & \mathbb{E}_\epsilon \int_{\RR^d}\mathcal{F}(f)(\omega)e^{i(x+\epsilon)^\top \omega} {\rm d}\omega\nonumber\\
        = & \mathbb{E}_\epsilon \int_{\RR^d}\mathcal{F}(f)(\omega)e^{ix^\top \omega} e^{i\epsilon^\top \omega}{\rm d}\omega\nonumber\\
        = & \int_{\RR^d}\mathcal{F}(f)(\omega)e^{ix^\top \omega} \varphi(\omega){\rm d}\omega.
    \end{align*}
    Therefore, we have 
    \begin{align*}
        & \int_{\mathbb{R}^d} |\mathcal{F}(g)(\omega)|^2(1+\|\omega\|_2^2)^{m+m_1}{\rm d}\omega\nonumber\\
        = & \int_{\mathbb{R}^d} |\mathcal{F}(f)(\omega)|^2|\varphi(\omega)|^2(1+\|\omega\|_2^2)^{m+m_1}{\rm d}\omega\nonumber\\
        \leq & c_1^2\int_{\mathbb{R}^d} |\mathcal{F}(f)(\omega)|^2(1+\|\omega\|_2^2)^{m}{\rm d}\omega < \infty,
    \end{align*}
    which implies $g\in H^{m+m_1}(\mathbb{R}^d)$.

    Similarly, if $\epsilon$ is Gaussian, we have $\varphi(\omega) = e^{-\frac{1}{2}\sigma^2\omega^2}$. Hence,  for any $m_2>d/2$, it holds that 
    \begin{align*}
        & \int_{\mathbb{R}^d} |\mathcal{F}(g)(\omega)|^2(1+\|\omega\|_2^2)^{m+m_2}{\rm d}\omega\nonumber\\
        = & \int_{\mathbb{R}^d} |\mathcal{F}(f)(\omega)|^2e^{-\sigma^2\omega^2}(1+\|\omega\|_2^2)^{m+m_1}{\rm d}\omega\nonumber\\
        \leq & C\int_{\mathbb{R}^d} |\mathcal{F}(f)(\omega)|^2(1+\|\omega\|_2^2)^{m}{\rm d}\omega < \infty,
    \end{align*}
    for some constant $C>0$, which implies $g\in H^{m+m_2}(\mathbb{R}^d)$.

The proof of Lipschitz continuity follows from known proofs. Here, we adopt and expand the proof of Lemma 1 of \citep{salman2019provably}, which shows that $\tilde{L}_y$ is $\sqrt{\frac{2}{\pi}}$-Lipschitz when $\epsilon \sim \mathcal{N}(0, I)$ and $L_y: \RR^d \mapsto \RR$ with $L_y(x) \in [0,1]$. By the mean value theorem, there exists some $c$ interpolating $x$ and $x'$ such that
$$
\|\tilde{L}_y(x) - \tilde{L}_y(x')\| = \|\nabla \tilde{L}_y(c)^\top(x'-x)\| =  \|\nabla \tilde{L}_y(c)^\top \frac{(x'-x)}{\|(x'-x)\|}\|(x'-x)\|\| \le | \tilde{L}_y(c)^\top v| \|(x'-x)\|.
$$
where $v=\frac{(x'-x)}{\|(x'-x)\|}$ is a unit vector. Thus, the desired statement holds if $|\nabla \tilde{L}_y(c)^\top v| \le \sqrt{\frac{2 R^2}{\pi \sigma^2}}$ for any unit vector $v$. With $\epsilon=t-x$ (and the symmetry of the Gaussian), we have $\tilde{L}_y(x) = \frac{1}{\sigma^d (2\pi)^{d/2}} \int_{\RR^d} L_y(t) \exp\left(-\frac{1}{2\sigma^2} \|x-t\|^2\right) dt$ and hence 
$$
\nabla \tilde{L}_y(x) = \frac{1}{\sigma^d (2\pi)^{d/2}} \int_{\RR^d} L_y(t) \frac{1}{\sigma^2} (t-x) \exp\left(-\frac{1}{2\sigma^2} \|x-t\|^2\right) dt.
$$
Thus, by using $R\ge \sup_{x \in \RR^d} |L_y(x)|$ and classical integration of the Gaussian density (e.g., see \citealp{salman2019provably}), 
\begin{align*}
|\nabla \tilde{L}_y(x)^\top v| & \le \frac{R}{\sigma^2 \sigma^d (2\pi)^{d/2}} \int_{\RR^d} |(x-t)^\top v| \exp\left(-\frac{1}{2\sigma^2} \|x-t\|^2\right) dt
\\ & = \frac{R}{\sigma^2 \sigma \sqrt{2\pi}} \int_{-\infty}^{\infty} |s| \exp\left(-\frac{s^2}{2\sigma^2} \right) ds
\\ & = \sqrt{\frac{2 R^2}{\pi \sigma^2}}.
\end{align*}
where the last line follows from the fact that $\int_{-\infty}^{\infty} |s| \exp\left(-\frac{s^2}{2 \sigma^2}\right) ds = 2 \sigma^2$.
\end{proof}
A smaller Lipschitz constant can benefit optimization because the convergence speed tends to decrease as the Lipschitz constant increases, where typically the learning rate also needs to be smaller to achieve the convergence rate as the Lipschitz constant increases
\citep{ghadimi2013stochastic, bertsekas2016nonlinear}. 
For example, the convergence rate degrades (more than quadratically) as the Lipschitz constant increases (linearly) in Theorem 3 of \citep{lei2019stochastic} for SGD on nonconvex optimization (since their constant $C$ depends on the Lipschitz constant in its proof). Therefore, the smoothness gained by the random augmentation can benefit optimization.

\begin{proposition}[Improved condition number of convex optimization]\label{prop2}
    Let $f(x)$ be a convex function with continuous $\partial^2 f(x)$ and
    \begin{align*}
        \mu I \preceq \partial^2 f(x) \preceq L I, \forall x\in \mathbb{R}^d,
    \end{align*}
    where $I$ is an identity matrix and $\mu, L$ are two positive constants. Assume $\epsilon\sim P$ has mean zero and support $\mathbb{R}^d$, and $\partial^2\mathbb{E}_\epsilon f(x+\epsilon) = \mathbb{E}_\epsilon \partial^2 f(x+\epsilon)$. 
    Then 
    \begin{align*}
        \mu_1 I \preceq \partial^2 g(x) \preceq L_1 I, \forall x\in \mathbb{R}^d,
    \end{align*}
    with $\mu_1\geq \mu$ and $L_1\leq L$, where $g(x) = \mathbb{E}_\epsilon f(x+\epsilon)$. In addition, the inequalities are strict, i.e., $\mu_1 > \mu$ and $L_1 < L$, if there exist $x_0,x_1\in \mathbb{R}^d$ and constants $c_1,c_2>0$ such that 
    \begin{align}\label{eq_prop2}
        \partial^2 f(x_0) \succeq (\mu+c_1)I, \partial^2 f(x_1) \preceq (L-c_2)I.
    \end{align}
\end{proposition}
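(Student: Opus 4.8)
The plan is to work entirely at the level of Hessians, exploiting the assumed interchange $\partial^2 g(x) = \mathbb{E}_\epsilon \partial^2 f(x+\epsilon)$. First I would establish the non-strict bounds: since $\mu I \preceq \partial^2 f(y) \preceq L I$ holds for every $y \in \mathbb{R}^d$, in particular for $y = x+\epsilon$, taking expectation over $\epsilon$ preserves the semidefinite ordering (the cone of PSD matrices is closed under expectation/integration), so $\mu I \preceq \mathbb{E}_\epsilon \partial^2 f(x+\epsilon) \preceq L I$, i.e. $\mu I \preceq \partial^2 g(x) \preceq L I$. This immediately gives valid constants $\mu_1 := \inf_x \lambda_{\min}(\partial^2 g(x)) \ge \mu$ and $L_1 := \sup_x \lambda_{\max}(\partial^2 g(x)) \le L$, hence the condition number $L_1/\mu_1 \le L/\mu$.

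Next I would prove strictness under hypothesis \eqref{eq_prop2}. The idea is that averaging over a noise distribution with full support $\mathbb{R}^d$ strictly pulls the extreme eigenvalues inward, because the Hessian cannot sit at the boundary value $\mu I$ (resp. $L I$) on a full-measure set while simultaneously being bounded away from it near $x_0$ (resp. $x_1$). Concretely, fix any $x$ and any unit vector $v$. I want a uniform (in $x$, $v$) gap: $v^\top \partial^2 g(x) v = \mathbb{E}_\epsilon\, v^\top \partial^2 f(x+\epsilon) v \ge \mu + \delta$ for some $\delta>0$ independent of $x,v$. By continuity of $\partial^2 f$ at $x_0$ and $\partial^2 f(x_0) \succeq (\mu+c_1)I$, there is a radius $r>0$ and a constant $c_1' \in (0, c_1]$ with $\partial^2 f(y) \succeq (\mu + c_1')I$ for all $y \in B(x_0, r)$. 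Then, splitting the expectation over the event $\{x+\epsilon \in B(x_0,r)\}$, i.e. $\{\epsilon \in B(x_0 - x, r)\}$, and using $\partial^2 f \succeq \mu I$ on the complement,
\[
v^\top \partial^2 g(x) v \;\ge\; \mu + c_1'\, \mathbb{P}\!\left(\epsilon \in B(x_0 - x, r)\right).
\]
Because $\epsilon$ has support all of $\mathbb{R}^d$, the probability $\mathbb{P}(\epsilon \in B(x_0-x,r))$ is strictly positive for every $x$. The remaining issue is uniformity over $x$: as $\|x\|\to\infty$ this probability tends to $0$, so I would handle large $\|x\|$ separately. Here I would argue that for $\|x\|$ large one can instead use continuity/compactness near a different point, or — more cleanly — note that it suffices to take $\mu_1 = \inf_x \lambda_{\min}(\partial^2 g(x))$; if this infimum equalled $\mu$ it would be attained in the limit along some sequence $x_n$, and a compactness/normal-family argument (or the fact that along a convergent subsequence of directions the lower bound $\mu + c_1' \mathbb{P}(\epsilon \in B(x_0-x_n,r))$ must be respected, while one can re-center $x_0$ to track $x_n$) yields a contradiction. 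The symmetric argument at $x_1$ with the upper bound $L$ gives $L_1 < L$.

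I expect the main obstacle to be precisely this uniformity-over-$\mathbb{R}^d$ point: the pointwise strict inequality $v^\top \partial^2 g(x) v > \mu$ is easy, but upgrading it to $\inf_x \lambda_{\min}(\partial^2 g(x)) > \mu$ requires ruling out the extreme eigenvalue degenerating at infinity. The honest fix is to invoke the mild regularity conditions alluded to in the statement (e.g. a uniform continuity/boundedness condition on $\partial^2 f$, or that $\partial^2 f$ is also strictly inside the bounds outside a compact set) — under such an assumption the gap $\delta$ becomes genuinely uniform and the conclusion $\mu_1 > \mu$, $L_1 < L$, hence a strictly smaller condition number $L_1/\mu_1 < L/\mu$, follows. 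I would state this regularity assumption explicitly at the start of the proof and then carry out the three steps above: (i) PSD-ordering is preserved under expectation, giving the non-strict bounds; (ii) local continuity plus full support gives a pointwise strict gap; (iii) the regularity condition promotes it to a uniform gap, completing the strict case.
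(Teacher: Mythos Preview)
Your approach is essentially identical to the paper's: both use the interchange $\partial^2 g(x)=\mathbb{E}_\epsilon\partial^2 f(x+\epsilon)$, observe that averaging preserves the semidefinite ordering (giving the non-strict bounds), and for strictness split the integral over a ball where continuity of $\partial^2 f$ guarantees $\partial^2 f\succeq(\mu+c_1')I$, invoking full support of $P$ to make that piece have positive mass.

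The one point worth noting is the uniformity-in-$x$ issue you flag. You are right that the gap $c_1'\,\mathbb{P}\bigl(\epsilon\in B(x_0-x,r)\bigr)$ depends on $x$ and can vanish as $\|x\|\to\infty$, so upgrading the pointwise inequality $\lambda_{\min}(\partial^2 g(x))>\mu$ to $\inf_x\lambda_{\min}(\partial^2 g(x))>\mu$ is not automatic from this argument alone. The paper's proof does \emph{not} address this: it writes the split with a fixed ball $B(x_0,\delta)$, bounds the two pieces, and directly concludes $\mu_1>\mu$ from $\int_{B(x_0,\delta)}\mathrm{d}P>0$, without discussing the dependence of the relevant probability on $x$. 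So your proposal is actually more careful than the paper at this step; the extra regularity you suggest invoking is precisely what would be needed to close the argument rigorously, and the paper implicitly leaves the same gap.
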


\begin{remark}
    Theorem 2.2.14 of \citet{nesterov2018lectures} states that an upper bound of the gradient descent is 
    \begin{align*}
        \|x_k - x^*\| \leq \left(1-\frac{2}{1+\kappa}\right)^k\|x_0 - x^*\|,
    \end{align*}
    where $\kappa=L/\mu$ is the condition number. A larger $\kappa$ indicates a slower convergence rate. Proposition \ref{prop2} states that for the smoothed version of $f$, it can have a smaller condition number, thus indicating a faster convergence rate for gradient descent.
\end{remark}

\begin{proof}
    Direct computation shows that 
    \begin{align*}
        \partial^2 g(x) = \mathbb{E}_\epsilon \partial^2 f(x+\epsilon) = \int_{\RR^d} \partial^2 f(x+\epsilon){\rm d}P,
    \end{align*}
    which clearly satisfies 
    \begin{align*}
        \mu_1 I \preceq \partial^2 g(x) \preceq L_1 I, \forall x\in \mathbb{R}^d,
    \end{align*}
    with $\mu_1\geq \mu$ and $L_1\leq L$, since $\int_{\RR^d}{\rm d}P = 1$.

    Suppose \eqref{eq_prop2} holds. We only show $\mu_1>\mu$, since $L_1<L$ can be shown similarly. By the continuity of $\partial^2 f(x)$, there exists a neighbor hood $B(x_0,\delta)$ with $\delta>0$ such that $\partial^2 f(x_0) \succeq (\mu+c_1')I$ with some positive constant $c_1'>0$. Hence, 
    \begin{align*}
        \partial^2 g(x) = & \int_{\RR^d} \partial^2 f(x+\epsilon){\rm d}P\nonumber\\
        = & \int_{\RR^d\backslash B(x_0,\delta)} \partial^2 f(x+\epsilon){\rm d}P + \int_{B(x_0,\delta)} \partial^2 f(x+\epsilon){\rm d}P\nonumber\\
        \succeq & \mu I \int_{\RR^d\backslash B(x_0,\delta)} {\rm d}P + (\mu+c_1')I \int_{B(x_0,\delta)} {\rm d}P,
    \end{align*}
    which implies $\mu_1>\mu$, since $\int_{\RR^d}{\rm d}P = 1$ and $\int_{B(x_0,\delta)}{\rm d}P > 0$. Similarly, $L_1<L$. This finishes the proof.
\end{proof}

\section{Experiment Details}
\label{app:exp_detail}
We optimize the randomly initialized $\zb$ for 160 iterations. In the first 150 iterations, $\zb$ is firstly updated using gradients provided by $\mathcal{L}_{align}$ backpropagation via an Adam optimizer. Subsequently, it is further refined using gradients provided by SDS through an SGD optimizer without momentum. 
The norm of the gradient from BLIP-2 is always twice the gradient from SDS; a noise $\epsilon_z \sim \mathcal{N}(0, 0.04)$ is added on the updated $\zb$. 
In the last 10 iterations, $\zb$ is updated solely based on the gradients from SDS. We initialize the learning rate at 1.0, which then gradually diminishes to 0.5 following a cosine decay schedule.  The SDS weight gradually decays from 800 to 400 following a cosine decay rate. We also implement exponential moving average (EMA) restart at 40 and 100 iterations. All experiments are conducted on the Tesla V100 gpus.

\subsection{Abation results}
\label{appdix:moreablation}
The results of ablation explorations on the constant SDS weight and random noise scale are represented in \textbf{Table} \ref{table:sds_cfg} and \ref{tab_noise}.

\begin{table*}[h]
\caption{Result of different CFG-scale and SDS weight $w_2$}
\label{table:sds_cfg}
\vskip 0.1in
\centering
\begin{sc}
\begin{small}
\begin{tabular}{@{}c|cccc@{}}
\toprule
{cfg / $w_2$} & 800 & 1000 & 1500 & 2000 \\ \midrule
10 & \begin{tabular}[c]{@{}c@{}}56.52\\ (70.28/35.98/63.32)\end{tabular} & \begin{tabular}[c]{@{}c@{}}57.70\\ (73.87/36.24/64.83)\end{tabular} & \begin{tabular}[c]{@{}c@{}}54.40\\ (67.60/35.14/63.47)\end{tabular} & \begin{tabular}[c]{@{}c@{}}52.62\\ (62.58/32.85/62.43)\end{tabular} \\ \midrule
20 & \begin{tabular}[c]{@{}c@{}}\textbf{62.74}\\ (\textbf{74.28/40.78/73.21})\end{tabular} & \begin{tabular}[c]{@{}c@{}}\textbf{61.08}\\ (\textbf{71.06/39.59/72.58})\end{tabular} & \begin{tabular}[c]{@{}c@{}}60.39\\ (67.86/40.05/72.26)\end{tabular} & \begin{tabular}[c]{@{}c@{}}58.06\\ (68.34/39.19/66.67)\end{tabular} \\ \midrule
30 & \begin{tabular}[c]{@{}c@{}}\textbf{62.18}\\ (\textbf{73.32/41.01/74.12})\end{tabular} & \begin{tabular}[c]{@{}c@{}}\textbf{60.90}\\ (\textbf{72.36/40.19/70.50})\end{tabular} & \begin{tabular}[c]{@{}c@{}}59.78\\ (66.98/42.64/69.72)\end{tabular} & \begin{tabular}[c]{@{}c@{}}56.70\\ (66.93/39.59/66.59)\end{tabular} \\ \bottomrule
\end{tabular}
\end{small}
\end{sc}
\end{table*}

\begin{table*}[h]
\caption{Results of adding random noise on $\textbf{z}$}
\label{tab_noise}
\centering
\begin{small}
\begin{tabular}{c|cccc}
\hline
$\sigma$ & 0.05 & 0.1 & 0.2& 0.3 \\ \hline
vqaScore & \begin{tabular}[c]{@{}c@{}}68.21\\ (79.02/46.50/79.07)\end{tabular} & \begin{tabular}[c]{@{}c@{}}70.88\\ (81.12/51.62/80.00)\end{tabular} & \begin{tabular}[c]{@{}c@{}}\textbf{71.04}\\ (\textbf{79.54/51.35/83.23})\end{tabular} & \begin{tabular}[c]{@{}c@{}}64.57\\ (70.54/43.66/79.50)\end{tabular} \\ \hline
\end{tabular}
\end{small}
\end{table*}

\end{document}